%% file: main.tex
\documentclass[11pt]{article}
\input{preamble}

\title{The Router Within: Eliciting\\Native Skill Routing from a Frozen LLM}
\runningtitle{GAVEL: The Router Within}
\date{arXiv preprint, September 2026}
\paperlogo{\includegraphics[height=1.5cm]{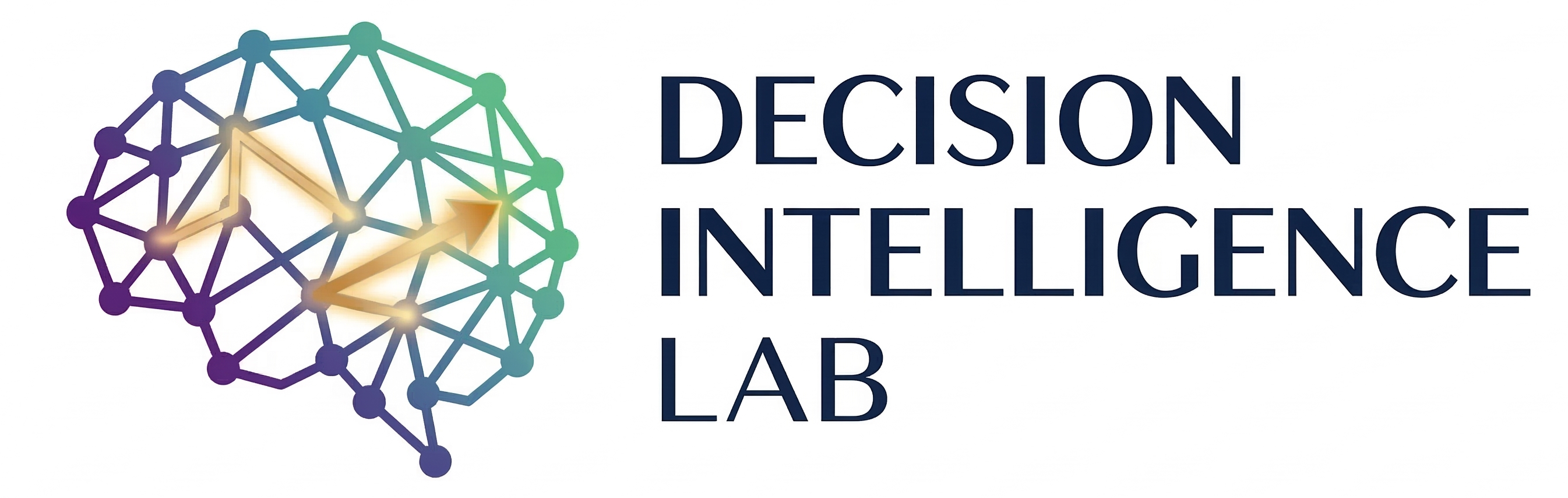}}
\author{
  Ruishuo Chen$^{1}$, Xun Wang$^{1}$, Yu Chen$^{1}$, Zhuoran Li$^{1}$, and
  Longbo Huang$^{1\,\text{\faEnvelope}}$
  \\[0.3em]\normalfont
  $^1$Institute for Interdisciplinary Information Sciences, Tsinghua University
  \\
  \text{\faEnvelope}\ Correspondence: \href{mailto:longbohuang@tsinghua.edu.cn}{\texttt{longbohuang@tsinghua.edu.cn}}
}

\begin{document}
\maketitle
\thispagestyle{fancy}

\input{sections/0_abstract}
\input{sections/1_introduction}
\input{sections/2_related}
\input{sections/3_method}
\input{sections/4_experiments}
\clearpage
\input{sections/5_conclusion}
\FloatBarrier

\begingroup
\small
\urlstyle{same}
\bibliographystyle{dilab_ref}
\bibliography{references}
\endgroup

\makeappendixtoc
\appendix
\input{appendix/01_native_readouts}
\input{appendix/02_layer_selection}
\input{appendix/03_training}
\input{appendix/04_compression}
\input{appendix/05_gemma}
\input{appendix/06_ablations}
\input{appendix/07_prompts}
\input{appendix/08_benchmarks}
\clearpage
\input{appendix/09_deployment}
\input{appendix/10_cost}
\clearpage
\input{appendix/11_discussion}
\end{document}

%% file: preamble.tex
\usepackage{dilab_arxiv}
\input{math_commands.tex}

\usepackage{enumitem}
\usepackage{flafter}
\usepackage{placeins}
\usepackage{bookmark}
\apptocmd{\thebibliography}{\raggedright}{}{}

\newtheorem{proposition}{Proposition}
\newtheorem{corollary}{Corollary}

\setlist[itemize]{leftmargin=*,itemsep=3pt,topsep=5pt}

\hypersetup{
  pdftitle={The Router Within: Eliciting Native Skill Routing from a Frozen LLM},
  pdfauthor={Ruishuo Chen, Xun Wang, Yu Chen, Zhuoran Li, and Longbo Huang},
  pdfsubject={Gavel: skill routing from a frozen language model},
  pdfkeywords={large language models, agents, skills, routing, Gavel},
  bookmarksnumbered=true
}

%% file: math_commands.tex
\usepackage{amsmath,amsfonts,bm}

\def\eqref#1{equation~\ref{#1}}
\def\1{\bm{1}}

\DeclareMathAlphabet{\mathsfit}{\encodingdefault}{\sfdefault}{m}{sl}
\SetMathAlphabet{\mathsfit}{bold}{\encodingdefault}{\sfdefault}{bx}{n}

%% file: sections/0_abstract.tex
\begin{abstract}
Skills extend an LLM agent beyond its parametric knowledge, and the gain they promise rests on picking the right one. Deployed harnesses route by preloading every skill's metadata into the context, which disperses the agent's attention and caps the library size. Retrieval pipelines move the selection out of the context, but also out of the agent's capability. We show that the frozen agent LLM already carries the routing signal in its own forward passes, and that two linear maps suffice to read it out with no skill text in the context. Our \textsc{Gavel} (\textbf{G}lance \textbf{A}nd \textbf{Ve}rdict from a frozen \textbf{L}LM) reads it in two steps. A glance scores the full library by matching the task's mid-layer states against a compact bank that one forward pass builds for each skill at installation, with the two maps as the only trained parameters. A verdict then resumes each shortlisted skill's forward pass, reads the model's own likelihood and yes/no judgment, and fuses both with the glance as a product of experts. Trained once, \textsc{Gavel} transfers zero-shot to three public benchmarks and SkillTraj, our new benchmark of 372 simulated agent trajectories. On Qwen3-32B it outperforms progressive disclosure and retrieve-and-rerank pipelines that add 1.2B to 16B external parameters, by up to 13.4 points on written tasks and up to 21.9 when the need for a skill arises mid-rollout. Routing accuracy improves as the backbone does, and in a bash-agent harness \textsc{Gavel} lets the 32B trigger the right skill on Skill-Use more often than models of up to 1.6T parameters in Codex.
\end{abstract}

%% file: sections/1_introduction.tex
\section{Introduction}
\label{sec:intro}

\begin{figure}[t]
\centering
\includegraphics[width=\textwidth]{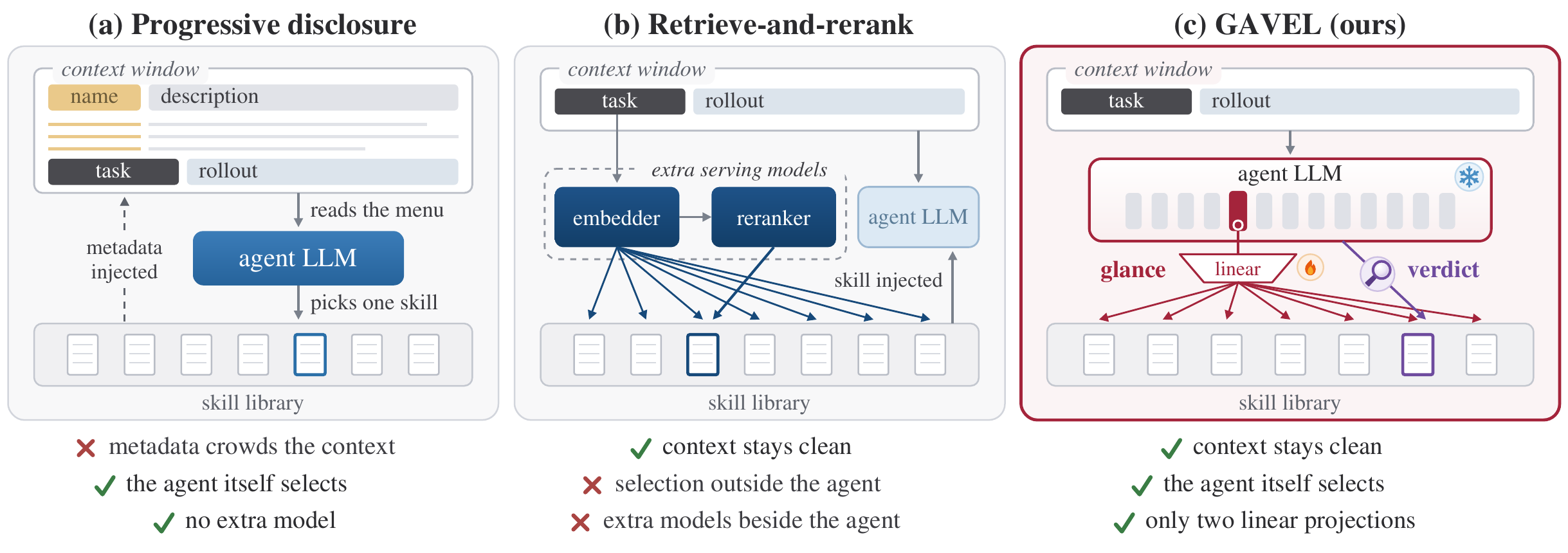}
\caption{Three designs for skill routing. \textbf{(a)}~Progressive
disclosure preloads every skill's metadata into the prompt, crowding the
context. \textbf{(b)}~Retrieve-and-rerank keeps the context clean but
hands selection to external models that lack the agent LLM's capability.
\textbf{(c)}~\textsc{Gavel} routes with the frozen agent LLM itself: a
glance over the whole library, read from one mid-layer state through
two trained linear maps, then a verdict in which the same model examines the
top candidates in full.}
\label{fig:intro}
\end{figure}

Skills have become the standard way to extend an LLM agent beyond its
parametric knowledge \citep{anthropic2025skills,xu2026agentskillssurvey}. A
skill packages instructions, scripts, and reference files in a \texttt{SKILL.md} document that the agent loads into its
context when a task calls for it. A well-chosen skill improves task performance,
while an ill-suited one leaves it worse off than no skill at all
\citep{li2026skillsbench}, and the choice grows harder as public libraries reach
tens of thousands of entries \citep{openai2025codexskills,gao2026skillreducer}.
Selecting the right skill from such a library is therefore the central problem, and
current systems take one of two routes.

Deployed agents, Claude Code and Codex among them, route by progressive
disclosure \citep{anthropic2025skills,openai2025codexskills}: they preload every
installed skill's name and description into the system prompt and let the agent
LLM itself decide which to read in full. The metadata, however, crowds the context
in proportion to the library, degrading the agent's work on the task
\citep{liu2024lost,modarressi2025nolima}. Codex therefore caps skill metadata at
2\% of the context window and trims the rest \citep{openai2025codexskills}.
This confines a deployment to a small library, where routing accuracy still decays
logarithmically with the number of skills \citep{chen2026scalinglaws}. The cost
is intrinsic, since routing inside the context spreads the agent's attention
over every skill it considers and judges each by a summary that omits
much of what selection depends on \citep{zheng2026skillrouter}.

To lift this burden from the context, recent skill routers, following tool
learning \citep{qin2024toolllm,zheng2026skillrouter,wang2026r3skill}, hand the
selection to external models. A retrieval pipeline, usually an embedding model
and a reranker, selects outside the context and injects only the chosen skill.
The context is spared, but the selection is also cut off from the agent's capability. A
standalone retriever may excel at matching text, but judges what
a task needs less reliably than the agent LLM
\citep{shi2025toolret,cho2026skillret}, especially amid the noisy context where
the need arises mid-rollout \citep{lumer2025scalemcp,fei2025mcpzero}. It also
does not improve as the agent itself does.

Selection inside the context thus taxes the agent, yet an external model does
not understand tasks and skills as well as the agent does. Can a router, then, select with
the agent's capability but outside its context? The open question is through
what interface, and at what cost, that understanding can be turned to routing.
We propose \textsc{Gavel} (\textbf{G}lance \textbf{A}nd \textbf{Ve}rdict from a
frozen \textbf{L}LM), the third design in Figure~\ref{fig:intro}, which draws
every routing signal from the frozen agent LLM's own forward passes with only two
linear projections and lets no skill text into the context until a
skill is chosen.

\textsc{Gavel} routes in two stages, a glance over the whole library followed
by a verdict on its shortlist. As the LLM decodes, its intermediate layers
compress the input's semantics \citep{skean2025layers}, but entangle them with
much that serves only the next token \citep{queipo2026sinks}, so no native
read-out ranks a library's skills (Appendix~\ref{app:native}). The glance
therefore grafts a new attention head onto the model. A query map reads each
task token's state at a mid layer, and a key map reads each skill token's state
there at installation. Both are trained contrastively so that a task token's
strongest match lands on a skill that serves it. Each task token votes for the
skills it matches best, so its few decisive tokens are not averaged away. A new
skill thus costs one forward pass and no training.

A factorized head, however, misses subtler inferences routing can turn on.
Full attention over skill and task does not, but it costs
one forward pass per candidate, which no library affords, so the verdict spends
it on the glance's shortlist alone. For each it resumes the installation pass with the
task appended, and reads from that forward both how strongly the skill primes the
model for the task and its log-odds judgment of whether the skill serves it.
Unlike a pipeline's scores, the three signals can each be read as the same log
posterior of skill given task, the glance contrastively, the likelihood
generatively, the judgment discriminatively, so they fuse as a product of
experts \citep{hinton2002products}.

We instantiate \textsc{Gavel} with Qwen3-32B as the agent LLM and train its
two projections, 7.9M parameters in all, once on the synthetic query--skill pairs
of SkillRet \citep{cho2026skillret}. We then evaluate it on three public
skill-selection benchmarks, which depart from that corpus in query author,
document genre, and library scale. All three hand the router a written task,
whereas a live agent's need often surfaces mid-rollout. We therefore also build
SkillTraj, a benchmark of 372 simulated agent trajectories that scores each
router at that moment under four scenarios of noisy multi-turn context. Across all
four, \textsc{Gavel} outperforms progressive disclosure given a
well-chosen shortlist of twenty skills and retrieve-and-rerank
pipelines that add 1.2B to 16B external parameters. It leads the strongest pipeline by 13.4 points on
SRA-Bench and by 8.6 to 21.9 points across SkillTraj's four scenarios.
Replacing any design choice costs accuracy on every benchmark, and
routing improves as the backbone does. In a bash-agent harness, \textsc{Gavel} lets Qwen3-32B trigger the
correct skill on Skill-Use \citep{han2026skilluse} more often than
frontier models of up to 1.6T parameters in Codex.

In summary, our contributions are as follows:
\begin{itemize}[leftmargin=1.5em, itemsep=0pt, parsep=0pt]
\item We propose \textsc{Gavel} (\textbf{G}lance \textbf{A}nd \textbf{Ve}rdict from a frozen \textbf{L}LM), a skill
router that elicits routing from the frozen agent LLM's own forward passes
and thus improves as the agent does, with no model beside it and no skill
text in the context until one is chosen.
\item On the technical side, a token-level glance reads a mid layer through
two linear maps against skill banks thinned to $\varepsilon$-covers with
bounded distortion, and a verdict reads generative and discriminative evidence
from a resumed forward pass. All three are fused as a product of experts.
\item We build SkillTraj, a benchmark of 372 simulated agent
trajectories that scores a router at the moment a skill becomes
needed, under four scenarios of noisy multi-turn context.
\item Across three public benchmarks and SkillTraj, \textsc{Gavel}
outperforms progressive disclosure and retrieve-and-rerank pipelines
that add up to 16B external parameters, by up to 21.9 points, and in
a live harness has Qwen3-32B load the correct skill more often than far
larger frontier models.
\end{itemize}

%% file: sections/2_related.tex
\section{Related Work}
\label{sec:related}

\textbf{Skill and tool selection.\,\,}
Deployed harnesses route by progressive disclosure, holding every skill's
metadata in context for the agent to read
\citep{anthropic2025skills,openai2025codexskills}. Retrieval-based systems move
the selection to an external stack, built over API pools
\citep{qin2024toolllm,lumer2025scalemcp}, invoked mid-rollout
\citep{fei2025mcpzero}, or trained on full skill documents
\citep{zheng2026skillrouter,wang2026r3skill}. ToolkenGPT
\citep{hao2023toolkengpt} has a frozen LLM emit a tool as a token, but
trains one embedding per tool, so a new tool costs a training run.
\textsc{Gavel} instead draws the routing signal from the agent's own
forward passes, with no skill text in the context and no model beside
the backbone.

\textbf{Reading information out of an LLM.\,\,}
Probing shows that hidden states carry more than the next token needs
\citep{alain2017understanding,belinkov2022probing}.
Text-embedding work scales this read-out, turning decoder LLMs into strong
encoders by fine-tuning \citep{wang2024improving,behnamghader2024llm2vec} or
by prompting alone \citep{springer2025repetition}. GRIT
\citep{muennighoff2025generative} unifies embedding and generation in one
model, but fine-tunes the whole backbone. Rerankers read the model's
predictions just as directly, scoring a document by the likelihood it
assigns to the query \citep{sachan2022improving,zhuang2023open} or by
its own yes/no relevance judgment
\citep{nogueira2020document,sun2023chatgpt}. \textsc{Gavel} turns these
read-outs into a skill router on the frozen agent LLM itself.

%% file: sections/3_method.tex
\section{\textsc{Gavel}: Routing on the Agent's Own Forward Passes}
\label{sec:method}

\subsection{Problem setup and design principles}
\label{sec:setup}

We model skill routing as follows. A skill library
$\mathcal{S}=\{s_1,\dots,s_N\}$ is a set of documents, each a metadata
header (name and description) followed by a body of instructions. The agent
is a frozen LLM $\mathcal{M}$ decoding a rollout. At a routing point, its
context $x$ holds the task, either a user request or an execution state
reached mid-rollout. A router observes $x$ and $\mathcal{S}$ and decides
which skill to load.

Current practice, as Section~\ref{sec:intro} reviewed it, gives rise to three
requirements on a router. (i) \emph{Task-only context}. Skill metadata preloaded into the prompt
disperses the agent's attention and hurts its work on the task
\citep{liu2024lost,modarressi2025nolima}, so no skill text may enter the context
until one is chosen. (ii) \emph{No standalone model}. The agent LLM judges a
task's needs more reliably than a standalone retriever
\citep{shi2025toolret,cho2026skillret}, so selection should inherit that
capability and improve as the agent does, with anything added on top
lightweight. (iii) \emph{Installation-time indexing}. Skills are files dropped
in a folder, and public libraries hold tens of thousands that keep changing
\citep{gao2026skillreducer}, so no training may be run for a new skill.

\textsc{Gavel} satisfies all three (Figure~\ref{fig:method}). The glance
(Sections~\ref{sec:glance} and~\ref{sec:compress}) ranks the full
library by matching hidden states of $\mathcal{M}$ on the task against
those that a single forward pass extracts per skill at installation, with no
training run. The verdict
(Section~\ref{sec:verdict}) re-examines the shortlist with a few forward
passes of $\mathcal{M}$ over skill and task together, and
Section~\ref{sec:fusion} fuses their three read-outs as a product of
experts. Throughout, $h_\ell(c)$ denotes the layer-$\ell$
hidden states of $\mathcal{M}$ on input $c$.

\begin{figure}[t]
\centering
\includegraphics[width=\textwidth]{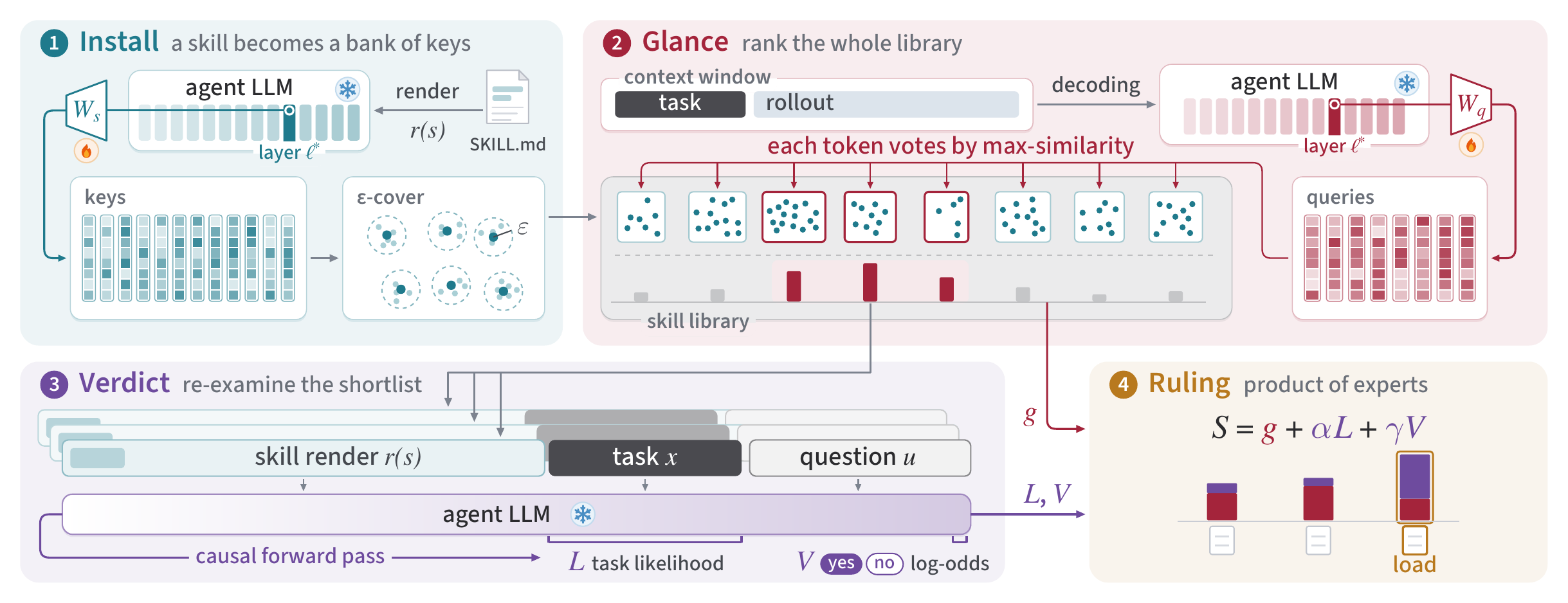}
\caption{The \textsc{Gavel} pipeline. \emph{Install}: $W_s$ maps a new
skill's layer-$\ell^\ast$ states in the frozen agent LLM to a bank of keys
thinned by an $\varepsilon$-cover. \emph{Glance}: $W_q$ reads the same
layer as the model decodes; each task token votes by max-similarity over
every bank, ranking the whole library. \emph{Verdict}: the model
re-examines each shortlisted skill against the task, giving the task
likelihood $L$ and the yes/no log-odds $V$. \emph{Ruling}: a product of
experts over glance and verdict picks the skill to load.}
\label{fig:method}
\end{figure}

\subsection{The glance: token-level read-out of the frozen forward pass}
\label{sec:glance}

The routing signal has to come out of the hidden states of $\mathcal{M}$,
which compress the input into its semantics \citep{skean2025layers} but entangle
it with much that serves only the next token \citep{queipo2026sinks}, so no
native read-out ranks a library (Appendix~\ref{app:native}).

The glance therefore grafts a new attention head onto the frozen
$\mathcal{M}$, a trained query map $W_q$ and key map $W_s$ with no value
path. Both maps read one intermediate layer $\ell^\ast$, chosen at the
floor of the model's compression valley
\citep{skean2025layers,queipo2026sinks}, where the matrix entropy of
the hidden states bottoms out. On our backbone this floor lies roughly
70\% of the way through. Training the read-out at every depth confirms this
unsupervised criterion (Appendix~\ref{app:layer}).

Intuitively, $W_q$ reads out of a task token's state the
capability it asks for, and $W_s$ reads out of a skill token's state the
capability it offers. The head costs little on the query side, since the
states $h_{\ell^\ast}(x)_i$ of the task tokens are a byproduct of the
decoding $\mathcal{M}$ performs anyway. Each becomes a unit query
$q_i \propto W_q\, h_{\ell^\ast}(x)_i$. On the skill side, when a skill
$s$ is installed we render it once in a fixed prompt $r(s)$
(Appendix~\ref{app:prompts}), run one
forward pass, and store a unit key
$d^s_j \propto W_s\, h_{\ell^\ast}(r(s))_j$ for every token $j$ of the
metadata header and body. The head attends across sequences, from a live
context into documents encoded elsewhere, and so carries no positional
encoding. The resulting bank $\{d^s_j\}_j$ is the head's key cache for
the skill, filled by that single forward pass.

At a routing point the head performs hard attention into this bank. Each
task token keeps its strongest logit against every skill, as in
late-interaction retrieval \citep{khattab2020colbert},
\begin{equation}
\label{eq:maxsim}
m_i(s) = \max_j \,\langle q_i, d^s_j\rangle .
\end{equation}
We train $W_q$ and $W_s$ once, contrastively, with the span mean
$\bar{m}(s) = \operatorname{mean}_i\, m_i(s)$ as the match score. Each
training task carries a set $\mathcal{S}^{+}$ of gold skills, negatives
$\mathcal{S}^{-}$ are drawn from the rest of the library, and the loss is the
multi-positive contrastive
\begin{equation}
\mathcal{L} \;=\; -\log
\frac{\sum_{s \in \mathcal{S}^{+}} e^{\tau \bar{m}(s)}}
     {\sum_{s \in \mathcal{S}^{+} \cup \mathcal{S}^{-}} e^{\tau \bar{m}(s)}}
\end{equation}
at temperature $\tau$. Gradients stop at $h_{\ell^\ast}$, so training
updates only the two maps and $\mathcal{M}$ stays frozen.

At routing time, however, we do not read the head out by that mean. Only
a handful of task tokens point at the right skill, and as the span grows,
the rest, each somewhat similar to every skill, bury the handful by sheer
number. Instead, each token votes, contributing $m_i(s)$ only to its
top-$k$ skills:
\begin{equation}
\label{eq:vote}
g(s|x) \;=\; \frac{1}{\sum_i w_i}\sum_{i \,:\, s \in \mathcal{T}_i} w_i\, m_i(s),
\qquad
\mathcal{T}_i = \text{top-}k \text{ skills under } m_i .
\end{equation}
The normalization keeps $g$ on the scale of the mean the head was trained
with, where $w_i$ is an adjustable decay that favors recent tokens, and
$k = \min(10, \max(3, \mathrm{round}(0.1N)))$ is 10 for any library of at least 100 skills.
Glancing the whole library
is thus one sweep of the head over the bank.

\subsection{Compressing the skill bank to an \texorpdfstring{$\varepsilon$}{epsilon}-cover}
\label{sec:compress}

The glance trains nothing at installation, but it stores a key per
token, so the bank grows with the library's total length. Much of it is
redundant. A document that dwells on one capability leaves a cluster of
keys pointing nearly the same way, and since every read-out passes
through the max of Eq.~(\ref{eq:maxsim}), a single representative serves
the whole cluster. At installation we therefore keep only a subset
$C_s \subseteq D_s = \{d^s_j\}_j$ with every discarded key within
distance $\varepsilon$ of a retained one.

\begin{proposition}[Distortion, proved in Appendix~\ref{app:compress}]
\label{prop:distortion}
Let $C_s \subseteq D_s$ be an $\varepsilon$-cover of the unit-norm bank
$D_s$ in Euclidean distance. Then for every unit vector $q$,
\begin{equation*}
\max_{d \in D_s}\,\langle q, d\rangle - \varepsilon
\;\le\; \max_{c \in C_s}\,\langle q, c\rangle
\;\le\; \max_{d \in D_s}\,\langle q, d\rangle .
\end{equation*}
\end{proposition}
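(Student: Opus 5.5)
The argument is elementary and splits into the two inequalities, each handled separately; the only tools are set inclusion and Cauchy--Schwarz. Note first that $D_s$ is finite (one key per token), so all maxima are attained. We also take $C_s$ nonempty, which holds whenever $D_s$ is nonempty, since an empty set cannot cover a nonempty one.

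\emph{Upper bound.} Since $C_s \subseteq D_s$, the maximum of $\langle q,\cdot\rangle$ over $C_s$ is a maximum over a subset of $D_s$. It therefore cannot exceed the maximum over $D_s$. This holds for every $q$, unit or not, and uses nothing about the cover.

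\emph{Lower bound.} Let $d^\ast \in D_s$ attain $\max_{d\in D_s}\langle q,d\rangle$. By the $\varepsilon$-cover property there is a $c \in C_s$ with $\|d^\ast - c\|_2 \le \varepsilon$. If $d^\ast$ was itself retained, take $c=d^\ast$, so the distance is $0$. Writing $\langle q, c\rangle = \langle q, d^\ast\rangle - \langle q, d^\ast - c\rangle$ and applying Cauchy--Schwarz gives
\[
\langle q, d^\ast - c\rangle \le \|q\|_2\,\|d^\ast - c\|_2 \le \varepsilon,
\]
using $\|q\|_2 = 1$. Hence
\[
\max_{c'\in C_s}\langle q,c'\rangle \;\ge\; \langle q,c\rangle \;\ge\; \max_{d\in D_s}\langle q,d\rangle - \varepsilon .
\]
Chaining the two bounds gives the statement.

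There is no real obstacle. The one point to state carefully is the convention for ``$\varepsilon$-cover'': every element of $D_s$, including the retained ones trivially, lies within Euclidean distance $\varepsilon$ of some element of $C_s$, which is exactly how Section~\ref{sec:compress} defines it. It is also worth remarking that the unit norm of the keys $d$ is not needed for this bound, only the unit norm of $q$. Unit keys matter only if one wants to translate $\varepsilon$ into an angular or cosine threshold, via $\|d-c\|^2 = 2-2\langle d,c\rangle$. Finally, since the bound holds for every $q$, it transfers token by token to $m_i(s)$ in Eq.~(\ref{eq:maxsim}). Each $m_i(s)$ therefore drops by at most $\varepsilon$ under compression, and the same holds for the span mean $\bar m(s)$, which is an average of such terms.
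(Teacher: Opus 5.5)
Your proof is correct and takes the same approach as the paper. The upper bound follows from $C_s \subseteq D_s$, and the lower bound comes from covering a maximizer $d^\ast$ with some $c \in C_s$ and bounding $\langle q, d^\ast - c\rangle \le \|q\|\,\|d^\ast - c\| \le \varepsilon$ by Cauchy--Schwarz.
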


Compression thus lowers each score by at most $\varepsilon$ and inflates
none, and the vote of Eq.~(\ref{eq:vote}) inherits the bound for every
skill clear of the top-$k$ cuts (Appendix~\ref{app:compress}).

We build the cover by a farthest-first traversal of a skill's keys
\citep{gonzalez1985clustering}, repeatedly retaining the key farthest
from those already retained until every remaining key lies within
$\varepsilon$ of them.

The retained set is also $\varepsilon$-separated, so its size is
pinned between the covering and packing numbers of $D_s$ at scale
$\varepsilon$ (Proposition~\ref{prop:size}, Appendix~\ref{app:compress}).
A skill's bank is thus sized by the number of
$\varepsilon$-distinguishable directions its tokens span rather than by
its length. Since
compression coarsens the geometry the query map was trained against, we
let $W_q$ take one short fine-tune against the training library's
compressed banks under the same loss, with $W_s$ and the keys frozen.

\subsection{The verdict: native signals under the model's full attention}
\label{sec:verdict}

The single factorized head that makes the glance cheap also denies it
the subtler semantic inferences on which a routing decision can turn.
Once the library is cut to a shortlist, we can afford the frozen model's
full attention over skill and task together. The verdict therefore appends the task to the render $r(s)$ encoded at
installation and reads its signals from the predictive distribution
$p_{\mathcal{M}}$ of one forward pass over this continuation. This follows
the order of classical query-likelihood scoring \citep{ponte1998language}
and leaves the installation pass a reusable prefix.

As the pass advances over the task, the predictive distribution at each
position scores the task token that actually comes next, with the skill
now in the prefix. The mean log-likelihood of the task,
\begin{equation}
\label{eq:lchan}
L(s|x) \;=\; \frac{1}{|x|}\sum_{i=1}^{|x|} \log p_{\mathcal{M}}(x_i \,|\, r(s),\, x_{<i}),
\end{equation}
measures how well the skill anticipates the task
\citep{sachan2022improving}. For the second
read-out we close the continuation with a fixed question $u$ asking
whether this skill provides what the task needs
(Appendix~\ref{app:prompts}), and take the model's log-odds of yes over no at the final
position,
\begin{equation}
\label{eq:vchan}
V(s|x) \;=\; \log\!\sum_{t \in \mathcal{Y}} p_{\mathcal{M}}(t \,|\, r(s), x, u)
\;-\; \log\!\sum_{t \in \mathcal{N}} p_{\mathcal{M}}(t \,|\, r(s), x, u),
\end{equation}
where $\mathcal{Y}$ and $\mathcal{N}$ collect the spellings of yes and
no \citep{nogueira2020document}. Since $u$ follows the task, the causal
mask leaves every task position untouched, so one pass yields both
read-outs.

\subsection{The ruling: a product of experts}
\label{sec:fusion}

In a retrieve-and-rerank pipeline an embedding model's similarity
screens the collection, and a reranker's score reorders the survivors,
superseding that similarity. The two signals differ in kind and in role.
The three scores of \textsc{Gavel} stand in a different relation. With the prior
uniform over the library, each can be read as an estimate of the same
log posterior $\log p(s|x)$, on a scale of its own.

The glance $g$ reads it contrastively, since the span mean that trains
the head minimizes the multi-positive InfoNCE loss of
Section~\ref{sec:glance}, whose optimum is the log ratio of the
posterior to the negative-sampling distribution up to a shift shared by
every skill under a task \citep{oord2018representation}. The vote of
Eq.~(\ref{eq:vote}) re-weights the tokens and holds that scale.
Summing Eq.~(\ref{eq:lchan}) over task positions gives
$\log p_{\mathcal{M}}(x \,|\, r(s))$, the log-likelihood of the task
with the skill as the condition. Under the uniform prior, Bayes' rule
turns this into the log posterior up to another such shift, so the
likelihood $L$ reads it generatively. The division by $|x|$ tempers this
expert by the task's length, so a long task cannot outweigh the other
two. The judgment
$V$ is the model's own posterior on the skill's relevance when asked, a
discriminative reading stated rather than derived.

We therefore rule by the product of experts \citep{hinton2002products},
\begin{equation}
\label{eq:ruling}
S(s|x) \;=\; g(s|x) \;+\; \alpha\, L(s|x) \;+\; \gamma\, V(s|x),
\end{equation}
so that $e^{S}$ multiplies the three experts, and any one of them can
veto a candidate the other two merely tolerate. The coefficients, fitted on validation data, are both
exchange rates that convert the verdict's nats onto the glance's scale and tempering exponents that
discount an overconfident expert. \textsc{Gavel} loads the skill whose
$S$ is highest among those the glance shortlisted.

%% file: sections/4_experiments.tex
\section{Experiments}
\label{sec:exp}

% We evaluate \textsc{Gavel} in settings of increasing realism, from static
% skill libraries (Section~\ref{sec:exp-static}), where we also ablate each
% design choice, through routing mid-rollout from live agent state
% (Section~\ref{sec:exp-traj}), to end-to-end deployment in a real agent
% harness (Section~\ref{sec:exp-harness}).
% Section~\ref{sec:exp-scaling} then studies how routing scales with the
% backbone.

\subsection{Setup}
\label{sec:exp-setup}

Our main experiments run \textsc{Gavel} on Qwen3-32B
\citep{yang2025qwen3} as the frozen agent backbone, where the matrix-entropy criterion of Section~\ref{sec:glance}
places the read-out after block 45 of the model's 64
(Appendix~\ref{app:layer}). The projections $W_q$ and $W_s$, with 7.9M
parameters in total, are trained once at temperature $\tau = 40$
on 51{,}104 of SkillRet's training queries \citep{cho2026skillret},
which Qwen3.5 wrote over 9{,}084 skills collected from
public repositories. The cover radius $\varepsilon = 0.83$ is fixed on the same split, and
shrinks the skill banks about $8.5\times$ at a cost of at most 1.6
points on any benchmark below. The exchange rates $(\alpha, \gamma) = (1.0, 0.025)$ and the pruning margin
$\Delta = 0.133$ are calibrated on SkillRet's validation split. A verdict
goes only to the candidates whose glance score lies within $\Delta$ of the best,
roughly nine on average. With the projections and these four scalars held fixed, every
number on every other library is zero-shot. Appendix~\ref{app:gemma} rebuilds the pipeline on a Gemma
backbone and repeats the main comparison. Appendix~\ref{app:more-ablations}
collects further ablations.

\subsection{Routing from written tasks}
\label{sec:exp-static}

We first evaluate routing from written tasks on three benchmarks. SkillRet's
official test set (v1) pairs 4{,}997 Claude-written queries with a
library of 6{,}660 held-out skills. SRA-Bench \citep{su2026skill}
changes the genre on both sides, with tasks from six reasoning and
coding benchmarks and 26{,}262 skills, most collected from the web. It
ships no test split, so we evaluate a stratified sample of 861 tasks.
Eval-Core, SkillRouter's own benchmark adapted from SkillsBench
\citep{li2026skillsbench}, poses 75 real \texttt{task.md} queries
against 78K documents in two pools \citep{zheng2026skillrouter}.

A router is scored by whether the single skill it commits to serves the
task, Hit@1 over the full library. On all three benchmarks gold
labels miss adequate skills, as exhaustive annotation is impractical.
We therefore score Hit@1 by adjudication. GPT-5.6 Sol (high)
\citep{openai2026gpt56sol} compares the committed skill
with every gold in both orders, given only the task and both
documents. The skill is credited when it wins at least as often as it
loses across all golds. Appendix~\ref{app:benchmarks} gives examples of under-annotated
queries, the unadjudicated scores, and a human check of the verdicts.

\subsubsection{Comparison with current practice}
\label{sec:exp-static-main}

\textsc{Gavel} is compared against the best of current practice.
Progressive disclosure cannot survey thousands of skills in context, so
we give it a retrieval front end. Qwen3-Embedding-8B
\citep{zhang2025qwen3embedding}, the top open embedding model
on MTEB, shortlists twenty skills for the agent to pick from. Retrieve-and-rerank comes in
the form SkillRouter ships, a 0.6B embedder and a 0.6B reranker trained
for skill selection \citep{zheng2026skillrouter}, and in a scaled-up
general-purpose form, Qwen3-Embedding-8B followed by Qwen3-Reranker-8B.
Alongside them we keep BM25 as the lexical anchor, and SKILLRET-Emb-0.6B, trained on SkillRet by its authors
\citep{cho2026skillret}.

As shown in Figure~\ref{fig:main-results}, \textsc{Gavel} leads all
three benchmarks at both stages. The full pipeline beats the
strongest baseline by 3.8 points on SkillRet, 13.4 on SRA-Bench, and
1.3 to 2.7 on Eval-Core. The glance alone is the strongest retrieval
stage on SRA-Bench and Eval-Core while coming within a point of the
best embedder on SkillRet (candidate-pool recall tells much the same
story, Appendix~\ref{app:pool-recall}). The two trained embedders,
SKILLRET-Emb-0.6B and SkillRouter's, match the glance on SkillRet,
whose queries and skills resemble their training data. However, once
the query style shifts from conversational user requests to the
\texttt{task.md} project files of Eval-Core, both begin to lag.
When the skills shift as well, from \texttt{SKILL.md} files written
for agents to the procedural web pages of SRA-Bench, both fall below
even BM25, the out-of-domain failure dense retrievers are known for
\citep{thakur2021beir}.

\begin{figure}[htbp]
\centering
\includegraphics[width=\textwidth]{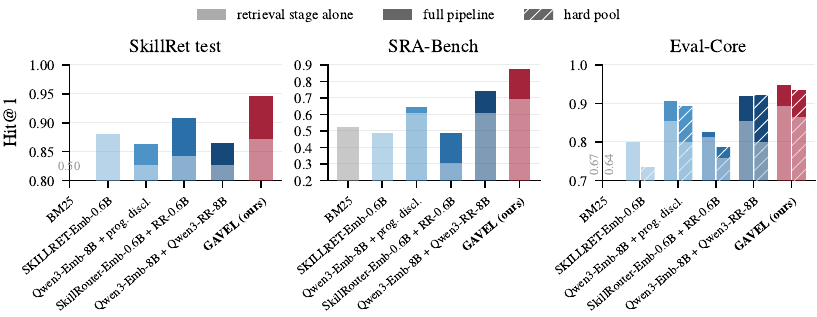}
\caption{Routing from written tasks. Light fill is the retrieval stage
committing its top-1 alone, dark fill the full pipeline built on it, and
on Eval-Core the hatched bar is the hard pool. Where BM25 falls below
the plotted range its score is printed in place of the bar.}
\label{fig:main-results}
\end{figure}

The untrained Qwen3-Embedding-8B exhibits the
opposite preference. On Eval-Core and SRA-Bench, matching a task file
against a skill document is close to the web-scale retrieval it was
trained for, and it fares well. SkillRet's conversational requests
share little vocabulary with the skill body
\citep{shi2025toolret,cho2026skillret}, and there it trails even
the far smaller trained embedders. Whether the shortlist is then
handed to progressive disclosure or to a dedicated reranker, these
widely practiced pipelines place an external model of 1.2B to 16B
parameters beside the agent, against our 7.9M. Yet none of them closes
the gap to a router read out of the agent itself.

\subsubsection{Ablations}
\label{sec:exp-static-ablation}

We next replace one design choice at a time, keep the rest fixed, and rerun the three benchmarks. Every replacement falls below the unmodified router on every benchmark (Figure~\ref{fig:ablations}).

\begin{figure}[htbp]
\centering
\includegraphics[width=0.8\textwidth]{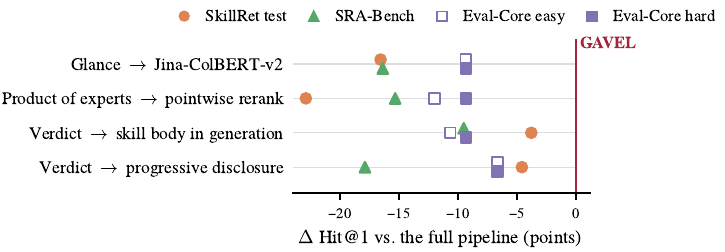}
\caption{Ablations. Each row replaces one design choice of
\textsc{Gavel} and shows how far the full pipeline drops on each
benchmark, in points against the unmodified router (red line). On
Eval-Core the hollow marker is the easy pool and the filled one the
hard pool.}
\label{fig:ablations}
\end{figure}

\textbf{Glance $\to$ Jina-ColBERT-v2.} The glance could win merely
by scoring at a finer granularity, every token pair rather than one
vector pair, and not by what it reads from the agent. So
Jina-ColBERT-v2 \citep{jha2024jinacolbert}, a retriever trained for
exactly this late-interaction scoring, takes the glance's place. It
nominates the shortlist and stands in for $g$ in the ruling, with
$(\alpha, \gamma)$ calibrated exactly
as ours are, yet loses badly on every benchmark even with the verdict
behind it. The glance profits from the far better compression of the
agent LLM it reads, not from how the tokens are compared.

\textbf{Product of experts $\to$ pointwise rerank.} We restore the
classical division of labor, in which retrieval only nominates and a reranker
alone decides. The ruling commits to $V$, the backbone's judgment of each
candidate on its own, and discards $g$ and $L$. This runs into the
poor calibration of self-reported judgment
\citep{qin2024pairwise}, and on SkillRet it
scrambles the ambiguous queries.

\textbf{Verdict $\to$ skill body in generation.}
This variant appends the skill after the task, takes the mean
log-likelihood of its body in place of the query-likelihood of
Eq.~(\ref{eq:lchan}), and reads $V$ with the same question. But a body's likelihood speaks more to how much the
skill resembles the model's own writing than to whether it serves the
task. The query-likelihood carries this bias equally for every skill,
leaving their order intact. Indeed the calibration discards the
document-likelihood outright,
driving $\alpha$ to zero, and on the remaining two signals the
variant falls visibly behind on every benchmark.

\textbf{Verdict $\to$ progressive disclosure.} Handing the glance's
twenty candidates to the menu of progressive disclosure tests whether
the verdict earns its forward passes over letting the agent read the
shortlist itself. The menu comes closest on Eval-Core, whose curated
skills carry descriptions that summarize them well. But it falls far
behind on SRA-Bench, where telling the right skill apart takes the
long procedural detail in its body, yet the pick rests on metadata
that cannot hold it. This is the information loss
\citet{zheng2026skillrouter} identify, which the verdict escapes by
reading the full body. Placing the twenty full bodies in the menu
itself does not close the gap either
(Appendix~\ref{app:fullbody}).

\subsection{Routing mid-rollout}
\label{sec:exp-traj}

\begin{figure}[t]
\centering
\includegraphics[width=\linewidth]{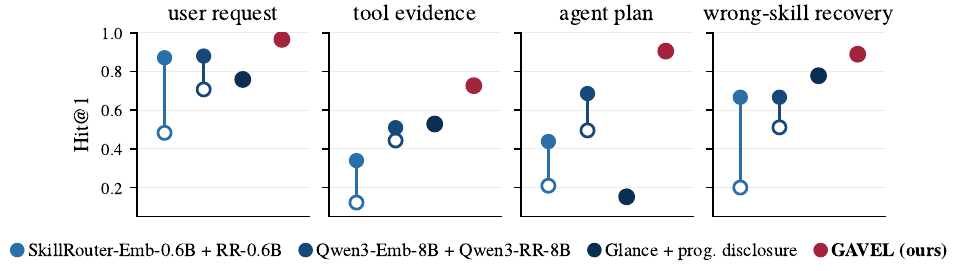}
% SkillRouter values are the st37 old judge round; swap in the
% bl2-protocol re-judge when it lands.
\caption{Routing mid-rollout on SkillTraj, a panel per scenario. A
retrieve-and-rerank pipeline has two points, hollow when it swallows
the full trajectory, filled when it keeps only the last message.}
\label{fig:traj}
\end{figure}

The benchmarks above hand the router a written task that states its need in a single clean turn. A live agent instead works from a long multi-turn context, and the need may surface only as it works through the task.
We therefore build SkillTraj, a new skill-routing benchmark of
simulated agent rollouts. Each of its 372 trajectories is a multi-turn
dialogue with tool calls, rendered in the backbone's chat template, and
marks the point at which a skill from the held-out SkillRet library of
6{,}660 becomes needed. A router is scored on the skill it commits to
at that point.

The points cover four scenarios, by where the need comes
from: stated outright by the user (\emph{user request}, 116), revealed
by a tool result (\emph{tool evidence}, 106), created by the agent's
own written plan (\emph{agent plan}, 105), or arising after the agent
has read the wrong skill and must reroute with the misleading document
still in context (\emph{wrong-skill recovery}, 45). Each trajectory is
written by GPT-5.6 Sol \citep{openai2026gpt56sol} and admitted by a
Claude judge only if the need is absent before the marked point, present after it, and solved by the gold skill (further checks in Appendix~\ref{app:skilltraj}).

On these trajectories \textsc{Gavel} reads the context where it
already sits, inside the agent's own forward pass. The glance sets the
vote decay to $w_i = 2^{-a_i/64}$, where $a_i$ counts the tokens
between position $i$ and the decision point, so recent messages weigh
most. The verdict then continues from the
most recent message, putting the backbone's reasoning to
work on a need only partly spelled out. A conventional
retrieve-and-rerank pipeline must instead decide what to render for
its retriever, either the whole noisy context, whose embedding mixes
every topic the trajectory has touched, or only the latest turn, at the
risk of losing the evidence that matters. Figure~\ref{fig:traj}
compares \textsc{Gavel} against each pipeline taken both ways and
against progressive disclosure, constrained to decode a menu index.

\textsc{Gavel} leads every scenario, by 8.6 to 21.9 points over the
strongest other system. These trajectories sit far from anything
SkillRouter was trained on, and it falls behind the larger untrained Qwen3 models in every
scenario, furthest when the full multi-turn noise has to be
swallowed. Yet the larger models cannot iron the noise out either; in
all four scenarios they are better served by the clean last message
than by the full context that holds all the usable evidence.
Progressive disclosure brings back the coarse, metadata-bound picking
that hurt it most on SRA-Bench. It holds its own on \emph{wrong-skill
recovery}, where the agent has just watched a skill fail and sums up
the missing capability in a sentence that metadata alone can match,
but collapses on \emph{agent plan}, where the freshly
drafted plan carries details only the skill body can confirm.

\subsection{End-to-end deployment}
\label{sec:exp-harness}

We further integrate \textsc{Gavel} end to end into
mini-swe-agent \citep{yang2024sweagent}, a minimal bash-agent harness. While every benchmark above hands the router its moment, a
live rollout requires a gate that decides when to attempt a skill
call. We therefore train one, taking the decision points of SkillTraj
as positives and trajectories built the same way but never warranting
a skill as negatives. It reads the glance scores the
preceding tokens have accumulated over the library and the final-layer
state from which the model predicts its next token. Once it stays on
for two consecutive tokens, \textsc{Gavel} attempts a selection.
If the verdict judges even the winner unfit ($V < 0$), it loads nothing.

\begin{table}[htbp]
\centering
\caption{Trigger rate on Skill-Use, the fraction of tasks on which
the correct skill is loaded, with frontier numbers from its paper.}
\label{tab:skilluse}
\small
\begin{tabular}{lc}
\toprule
System & Trigger \\
\midrule
\multicolumn{2}{l}{\emph{Progressive disclosure}} \\
GLM-5.1 (in Codex) & .706 \\
MiniMax-M3 (in Codex) & .864 \\
Qwen3.6-Max (in Codex) & .684 \\
DeepSeek-V4-Pro (in Codex) & .650 \\
Qwen3-32B (in mini-swe-agent) & .011 \\
\midrule
\multicolumn{2}{l}{\emph{Retrieve and rerank}} \\
Qwen3-Emb-8B + Qwen3-Reranker-8B & .897 \\
SkillRouter-Emb-0.6B + Reranker-0.6B & .800 \\
\midrule
\multicolumn{2}{l}{\emph{Ours}} \\
Qwen3-32B + \textsc{Gavel} (in mini-swe-agent) & \textbf{.909} \\
\bottomrule
\end{tabular}
\end{table}
We test the integration on Skill-Use \citep{han2026skilluse}, a
benchmark of 177 executable tasks over a curated library of 79 skills,
built to measure how reliably a model and harness invoke the right
skill under progressive disclosure. As
Table~\ref{tab:skilluse} shows, under the progressive-disclosure
prompt (Appendix~\ref{app:harness}) Qwen3-32B almost
never reads a skill at all. With
\textsc{Gavel} integrated, however, the same frozen model triggers
the correct skill in more than 90\% of the tasks, drawing on ability
it already carried (in another 5\% the router holds its fire and
loads nothing). That puts it ahead of every frontier model
in Codex, and ahead of the retrieve-and-rerank pipelines, which load
a skill on every trajectory whether one is needed or not.
Appendix~\ref{app:harness} walks through transcripts in which
\textsc{Gavel} invokes a skill mid-rollout, and Appendix~\ref{app:cost}
compares its cost with progressive disclosure over a whole session.

\subsection{Scaling with backbone capability}
\label{sec:exp-scaling}

% Keep the explanation and its compact plot together, with aligned tops.
\noindent
\begin{minipage}[t]{0.49\linewidth}
\vspace{0pt}
Much of the appeal of progressive disclosure is that its judgment of
skill selection improves for free as the agent LLM does.
\textsc{Gavel} in fact shares this advantage, as
Figure~\ref{fig:scaling} shows. On SRA-Bench both the glance stage
and the full form climb as the backbone grows in size and steps up in
generation. On a 0.6B model, one that nearly always
fails at picking from a metadata menu, \textsc{Gavel} already beats the 32B under
progressive disclosure, which here has the same backbone pick from a
menu of twenty candidates retrieved by Qwen3-Emb-8B (adjudicated
recall $\approx .93$). Even on Qwen3.8-27B, whose agentic ability
draws level with far larger frontier models on many benchmarks
\citep{qwen2026qwen38}, picking from the metadata
menu still trails \textsc{Gavel} on the same backbone by 12.2 points.
\end{minipage}\hfill
\begin{minipage}[t]{0.47\linewidth}
\vspace{0pt}
\captionsetup{type=figure}
\centering
\includegraphics[width=\linewidth]{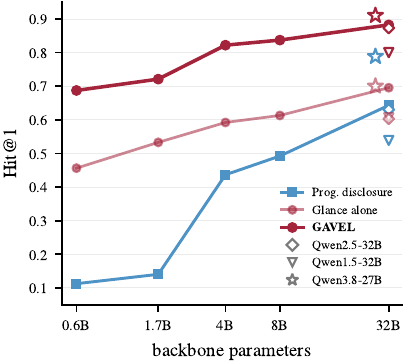}
\caption{Routing accuracy on SRA-Bench as the backbone grows more
capable, across Qwen3 sizes (solid lines), across generations at a
fixed 32B (hollow markers), and onto
Qwen3.8-27B (stars).}
\label{fig:scaling}
\end{minipage}
\par

%% file: sections/5_conclusion.tex
\section{Conclusion}
\label{sec:conclusion}

We presented \textsc{Gavel}, a router that returns skill selection to
the agent LLM without placing anything in its context. The glance
scores the full library from a mid-layer read-out of the frozen
backbone, the verdict re-scores the shortlist from the model's own
generative and discriminative predictions, and two linear projections
are all that is trained. Across three public benchmarks and SkillTraj
this native read-out outperforms progressive disclosure and
retrieve-and-rerank pipelines by up to 21.9 points, and in a live harness it has Qwen3-32B load the correct skill more
often than frontier models of up to 1.6T parameters. The frozen backbone already carries the capability that skill
selection requires, and a learned read-out is enough to put it to
work. Whether the same read-out also routes tools, memories, or
MCP servers from an agent's own forward passes is a question we leave
open.

%% file: appendix/01_native_readouts.tex
\section{Native read-outs that fail}
\label{app:native}

The glance reads $\mathcal{M}$ through two trained maps and aggregates
the per-token scores under a hand-crafted decay $w_i$, rather than
through machinery the model already owns. This appendix records three
alternatives that lean further toward the native machinery. Each fails,
and each failure fixes one piece of the adopted design.

\paragraph{Native attention keys as the matching space.}
Attention already computes token-level query--key matches, so the most
native glance would use them directly, indexing a skill by the
pre-rotary keys the backbone's own heads assign to its tokens and
matching the task's native queries against that bank. Without training,
the best of 24 configurations of head selection, low-rank whitening,
and aggregation reaches Hit@1 $.001$ on the SkillRet validation library
of 10{,}123 skills, against $.918$ for the trained glance head.
On SRA-Bench it scores exact zeros, and even its recall at 20 falls below
random ranking. Most of the loss comes from calibration, since native scores are not
comparable across documents encoded in separate sequences. Restricted
to a 20-candidate shortlist, the same read-out scores $.099$, twice
in-pool chance. Training a mixer over per-head scores does not repair
it. The mixture becomes a usable shortlist reranker but drops 7.6
points when asked to rank the full library, and its weights are
library-bound, losing 1.2 points on the library they were fitted on
and 24 on another. Equal-weight and non-negative mixtures collapse
further, which rules out negative weights as the cause. The native
keys do carry a real signal, one to two orders of magnitude too
faint. The best single head reaches $.084$ alone, and the ranking of
heads is stable across libraries, yet no a priori criterion finds the
good ones (the head placing the most attention mass on the gold
document ranks 29th by retrieval). Trained projections are what turn
this signal into a usable one.

\paragraph{Initializing from native heads.}
The trained head still needs an initialization and a shape, and the
native machinery might supply both. For the initialization, we stack
the query and key maps of six strong native heads up to the glance's
dimension and start $W_q$ and $W_s$ from them. For the shape, we keep
six separate heads and average their scores. On the SkillRet validation library, native
initialization ends 0.7 points below random initialization, the
multi-head shape 0.6 points below the single wide head, and the two
combined lose 4.6. Selecting which heads to stack contributes nothing
either, as six randomly chosen heads beat the six the mixer above
weights highest by 1.3 points. The glance head is therefore a single
map, randomly initialized.

\paragraph{Trained retrieval tokens.}
The decay $w_i$ in the vote of Section~\ref{sec:glance} is the one
hand-crafted piece of the glance, and native attention is the natural
candidate to replace it. We append 64 trained retrieval tokens to the
context, let each aggregate the preceding tokens through the
backbone's own attention, pass their hidden states through $W_q$, and
average their votes. Trained on the same corpus as the glance, the
tokens learn a recency read-out instead, settling their attention on
the most recent stretch of context rather than allocating it by
content. We attribute this more to the data than to the idea. The
plentiful supervision consists of clean single-turn requests, where
the recent context is the task and recency is a sufficient policy, so
nothing forces the tokens to allocate attention selectively. Yet the
router must survive contexts whose trigger sits buried mid-history.
We therefore keep the hand-crafted decay, and leave learned
aggregation, trained on data that rewards selectivity, to future
work.

%% file: appendix/02_layer_selection.tex
\section{Layer selection}
\label{app:layer}

\begin{figure}[htbp]
\centering
\includegraphics[width=0.5\linewidth]{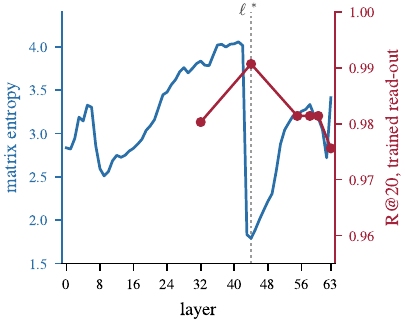}
\caption{Selecting $\ell^\ast$. Left axis: matrix-based entropy of the
token states of a rendered skill, median over 500 skills, sink token
removed. Right axis: R@20 of the read-out trained at each of six
candidate depths, on the validation queries. The trained scan
peaks at the entropy floor the unsupervised criterion picks.}
\label{fig:layer}
\end{figure}

We locate $\ell^\ast$ with a criterion measured on our own backbone and
corpus. For one rendered skill, let $Z$ stack its layer-$\ell$ token
states. The matrix-based entropy
$S_1 = -\sum_k p_k \log p_k$ over the normalized spectrum
$p = \lambda(ZZ^\top) / \operatorname{tr}(ZZ^\top)$ measures how many
directions the states occupy \citep{skean2025layers}. A
massive-activation token dominates this spectrum wherever it appears,
turning the entropy into a measure of the sink rather than the
representation \citep{queipo2026sinks}. Since the spans the glance
reads never contain that token, we drop the top-norm token before
taking the spectrum. Figure~\ref{fig:layer} plots the median over 500
skills in the render $r(s)$. The entropy climbs through the
first two thirds of the depth, collapses from $4.01$ to $1.83$ across
a single block, bottoms out at layer 44 of 64, and recovers toward the final layer.
Layer 44 is the output of block 45, roughly 70\% of the way through the
model. We set $\ell^\ast = 44$.

A trained scan then checks the choice against retrieval. At each of
six candidate depths (layers 32, 44, 55, 58, 60, 63) we train $W_q$
and $W_s$ under one frozen recipe and rank the full library for the
validation queries. Layer 44 comes first on R@20
(Figure~\ref{fig:layer}).

Run zero-shot, the same comparison inverts. By raw cosine over
span-pooled states of the same render on a 4{,}000-document library,
layer 44 is the second-worst of the 34 depths we banked (R@20 $.503$,
against $.916$ at layer 60), and the spread across depths is 44
points. Embedding read-outs peak in
the compression phase \citep{skean2025layers}. Here they do so only
through trained projections, as raw similarity cannot read the compressed states.

%% file: appendix/03_training.tex
\section{Training details}
\label{app:training}

The two projections $W_q$ and $W_s$ are $768 \times 5120$ matrices
without bias, 7.9M parameters in all, initialized at random. Both are
trained on the SkillRet training split with the loss of
Section~\ref{sec:glance} at a fixed temperature $\tau = 40$, in fp32,
with AdamW at a constant learning rate of $10^{-3}$ and weight decay
$0.01$ for 24{,}000 steps. Gradients stop at the backbone's states,
which are computed once and stored. Each step packs up to 256 tasks,
stopping once their gold and negative skills fill a budget of 200K
tokens, and resamples the negatives from the rest of the library. During training a skill contributes at most 8{,}192 tokens of
keys; at evaluation the only truncation is the 30{,}720-token serving
budget (Appendix~\ref{app:prompts}). After the banks are compressed, $W_q$
alone is fine-tuned against the compressed keys for a further 3{,}000
steps, with $W_s$ and the banks frozen. It uses the same loss and
optimizer at a learning rate of $10^{-4}$, chosen on the validation
split. On the written-task benchmarks the vote weights $w_i$ of
Eq.~(\ref{eq:vote}) are uniform, and the decay of
Section~\ref{sec:exp-traj} applies only mid-rollout.

%% file: appendix/04_compression.tex
\section{Compression guarantees}
\label{app:compress}

Throughout, $D = \{d_1,\dots,d_n\}$ is the key set of one skill, every
$\|d_j\| = 1$, and distances are Euclidean. A subset $C \subseteq D$ is an
$\varepsilon$-cover of $D$ if every $d \in D$ has some $c \in C$ with
$\|d - c\| \le \varepsilon$, and $\varepsilon$-separated if
$\|c - c'\| \ge \varepsilon$ for all distinct $c, c' \in C$. The covering
number $\mathcal{N}_\varepsilon(D)$ is the smallest size of an
$\varepsilon$-cover of $D$ by its own points, and the packing number
$\mathcal{P}_\varepsilon(D)$ the largest size of an
$\varepsilon$-separated subset.

\begin{proof}[Proof of Proposition~\ref{prop:distortion}]
The upper bound holds because $C_s \subseteq D_s$, so the max on the left
ranges over a subset. For the lower bound, let $d^\ast$ attain
$\max_{d \in D_s} \langle q, d\rangle$ and let $c \in C_s$ cover it,
$\|d^\ast - c\| \le \varepsilon$. By Cauchy--Schwarz,
\begin{equation*}
\langle q, c\rangle
= \langle q, d^\ast\rangle - \langle q,\, d^\ast - c\rangle
\ge \langle q, d^\ast\rangle - \|q\|\,\|d^\ast - c\|
\ge \max_{d \in D_s}\,\langle q, d\rangle - \varepsilon . \qedhere
\end{equation*}
\end{proof}

The bound carries through the vote of Eq.~(\ref{eq:vote}). Fix a
task $x$ with weights $w_i$ and $W = \sum_i w_i$, and write
$\tilde m_i(s)$, $\tilde{\mathcal{T}}_i$ and $\tilde g(s|x)$ for the
score, top-$k$ set and vote computed over $\varepsilon$-covers of every
bank. For a skill $s$ let $a_i(s)$ be the $k$-th largest of
$\{m_i(s') : s' \ne s\}$, and let
$B_x(s) = \{\, i : |m_i(s) - a_i(s)| \le \varepsilon \,\}$ be the tokens
on which $s$ sits within $\varepsilon$ of the top-$k$ boundary.

\begin{corollary}[Vote stability]
\label{cor:vote}
For every task $x$, skill $s$ and $\varepsilon \le 1$,
\begin{equation*}
\bigl|\tilde g(s|x) - g(s|x)\bigr|
\;\le\; \varepsilon + \frac{1}{W}\sum_{i \in B_x(s)} w_i .
\end{equation*}
In particular, when $B_x(s)$ is empty, so that $s$ clears every
token's top-$k$ boundary by more than $\varepsilon$, membership is
unchanged and $g(s|x) - \varepsilon \le \tilde g(s|x) \le g(s|x)$, so
compression lowers the vote by at most $\varepsilon$ and never raises it.
\end{corollary}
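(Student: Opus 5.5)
The plan is to decompose the vote token by token and bound each token's change separately, using Proposition~\ref{prop:distortion} as the only analytic input. Write $g(s|x) = \frac{1}{W}\sum_i w_i\, c_i(s)$ with $c_i(s) = m_i(s)\,\1[s \in \mathcal{T}_i]$, and similarly $\tilde c_i(s) = \tilde m_i(s)\,\1[s \in \tilde{\mathcal{T}}_i]$. The corollary then follows from the per-token bound
\begin{equation*}
\bigl|\tilde c_i(s) - c_i(s)\bigr| \;\le\; \varepsilon + \1[i \in B_x(s)] .
\end{equation*}
Summing with weights $w_i/W$ gives the claim, because $\sum_i w_i \varepsilon / W = \varepsilon$.

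\textbf{Step 1: scores move down by at most $\varepsilon$.} Apply Proposition~\ref{prop:distortion} to every bank with the unit query $q_i$. This gives $m_i(s') - \varepsilon \le \tilde m_i(s') \le m_i(s')$ for every skill $s'$. The $k$-th order statistic is monotone and shifts by at most the largest pointwise shift. So $\tilde a_i(s)$, the $k$-th largest compressed score among $s' \ne s$, satisfies $a_i(s) - \varepsilon \le \tilde a_i(s) \le a_i(s)$.

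\textbf{Step 2: membership is stable off $B_x(s)$.} Take $i \notin B_x(s)$.
\begin{itemize}
\item If $m_i(s) > a_i(s) + \varepsilon$, then $\tilde m_i(s) \ge m_i(s) - \varepsilon > a_i(s) \ge \tilde a_i(s)$. Hence $s$ lies in both $\mathcal{T}_i$ and $\tilde{\mathcal{T}}_i$.
\item If $m_i(s) < a_i(s) - \varepsilon$, then $\tilde m_i(s) \le m_i(s) < a_i(s) - \varepsilon \le \tilde a_i(s)$. Hence $s$ lies in neither set.
\end{itemize}
Both inequalities are strict, so ties at the top-$k$ cut never arise off $B_x(s)$. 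This is where I read "$s$ is in the top-$k$" as "$s$ strictly beats the $k$-th best rival." I expect pinning down this convention to be the one delicate point. On $B_x(s)$ itself, any tie-breaking rule is harmless.

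In the first case, Step 1 gives $c_i(s) - \varepsilon \le \tilde c_i(s) \le c_i(s)$. In the second case, $\tilde c_i(s) = c_i(s) = 0$. Either way the change is at most $\varepsilon$ and the bound is one-sided.

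\textbf{Step 3: tokens in $B_x(s)$.} Here membership may flip, and there are three cases.
\begin{itemize}
\item If $s$ is in both top-$k$ sets, the change is still at most $\varepsilon$, as in Step 2.
\item If $s$ is in neither set, the change is $0$.
\item If $s$ is in exactly one set, the change equals $|m_i(s)|$ or $|\tilde m_i(s)|$. Both are inner products of unit vectors, so the change is at most $1 \le \varepsilon + 1$.
\end{itemize}
This is the only place the boundedness of the scores enters. The hypothesis $\varepsilon \le 1$ is what keeps the $\varepsilon$-shifted scores within $[-1, 1]$ in the regime the statement intends. This completes the per-token bound.

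\textbf{The special case.} When $B_x(s) = \emptyset$, Step 2 applies to every token. Every contribution then satisfies $c_i(s) - \varepsilon \le \tilde c_i(s) \le c_i(s)$ or is unchanged at $0$. Averaging with weights $w_i/W$ gives $g(s|x) - \varepsilon \le \tilde g(s|x) \le g(s|x)$, so compression lowers the vote by at most $\varepsilon$ and never raises it.
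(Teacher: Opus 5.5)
Your proposal is correct and matches the paper's proof step for step: the one-sided shift from Proposition~\ref{prop:distortion}, the two strict cases showing that top-$k$ membership is unchanged off $B_x(s)$, and a membership flip on $B_x(s)$ costing at most $1$ because every score is an inner product of unit vectors. One small correction is that your argument never actually uses $\varepsilon \le 1$, since you charge $\varepsilon$ to every token plus an extra $1$ on $B_x(s)$, and compressed scores lie in $[-1,1]$ regardless of $\varepsilon$; the paper needs the hypothesis only because it charges just $w_i$ on $B_x(s)$, via $\varepsilon w_i \le w_i$.
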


\begin{proof}
Proposition~\ref{prop:distortion} applied to every skill gives
$m_i(s') - \varepsilon \le \tilde m_i(s') \le m_i(s')$ for all $i$ and
$s'$. Take $i \notin B_x(s)$. If $m_i(s) > a_i(s) + \varepsilon$, then
$s \in \mathcal{T}_i$, and
$\tilde m_i(s) \ge m_i(s) - \varepsilon > a_i(s)$ still exceeds the
$k$-th largest compressed score among the other skills, which can only
have fallen, so $s \in \tilde{\mathcal{T}}_i$. If instead
$m_i(s) < a_i(s) - \varepsilon$, then $s \notin \mathcal{T}_i$, and
the $k$ other skills scoring at least $a_i(s)$ still score at least
$a_i(s) - \varepsilon > m_i(s) \ge \tilde m_i(s)$ after compression,
so $s \notin \tilde{\mathcal{T}}_i$. Membership is therefore unchanged
off $B_x(s)$, and there the term $w_i\, m_i(s)$, when present, falls
by at most $\varepsilon w_i$ and never rises, since
$\tilde m_i(s) \le m_i(s)$. On $B_x(s)$ a term may appear or vanish,
moving by at most $w_i$ since every score is an inner product of unit
vectors, or change by at most $\varepsilon w_i \le w_i$. Summing over
$i$ and dividing by $W$ gives the bound.
\end{proof}

The bound is a worst case. Cauchy--Schwarz is tight in
Proposition~\ref{prop:distortion} only when $q$ is collinear with
$d^\ast - c$, which for unit vectors forces
$\langle q, d^\ast\rangle \le \varepsilon/2$, a key that barely matches
the query to begin with. On the tokens that decide a routing, $q$ lies
close to its best key $d^\ast$, and there
$\langle q, d^\ast - c\rangle \approx 1 - \langle d^\ast, c\rangle
= \|d^\ast - c\|^2/2 \le \varepsilon^2/2$. At the
$\varepsilon = 0.83$ of Section~\ref{sec:exp-setup} this is $0.34$,
against $0.83$ for the linear bound, in line with the cost of at most 1.6 points reported there.

\begin{proposition}[Bank size]
\label{prop:size}
The traversal returns a $C_s$ that is simultaneously an
$\varepsilon$-cover and $\varepsilon$-separated (retained keys are
pairwise at least $\varepsilon$ apart), hence
$\mathcal{N}_\varepsilon(D_s) \le |C_s| \le \mathcal{P}_\varepsilon(D_s)$,
where $\mathcal{N}_\varepsilon$ and $\mathcal{P}_\varepsilon$ are the
covering and packing numbers of $D_s$ at scale $\varepsilon$.
\end{proposition}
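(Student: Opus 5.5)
Two facts about the farthest-first traversal give the statement: the stopping rule makes the output an $\varepsilon$-cover, and the greedy choice makes it $\varepsilon$-separated. The two inequalities then follow from the definitions of $\mathcal{N}_\varepsilon$ and $\mathcal{P}_\varepsilon$.

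First I fix the traversal precisely. Start from an arbitrary key $c_1 \in D_s$. At step $t$, with retained set $C^{(t)} = \{c_1,\dots,c_t\}$, let
\[
\rho_t = \max_{d \in D_s} \min_{c \in C^{(t)}} \|d - c\|
\]
be the covering radius. If $\rho_t \le \varepsilon$ we stop and output $C_s = C^{(t)}$. Otherwise we append a maximizer $c_{t+1}$. Since $D_s$ is finite and each step adds a new point (its distance $\rho_t > 0$ from $C^{(t)}$), the procedure terminates. The cover property is exactly the stopping condition $\rho_t \le \varepsilon$.

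For separation, I show by induction that every newly added key lies at distance at least $\varepsilon$ from all earlier ones. When $c_{t+1}$ is added we have not stopped, so $\min_{c \in C^{(t)}} \|c_{t+1} - c\| = \rho_t > \varepsilon \ge \varepsilon$. Hence $c_{t+1}$ is at distance more than $\varepsilon$ from every earlier $c_j$. Every pair $c_i, c_j$ with $i<j$ is handled at the moment $c_j$ enters, so $C_s$ is $\varepsilon$-separated. (Separation is in fact strict, which is stronger than the definition requires.)

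The bounds are then immediate. $C_s$ is an $\varepsilon$-cover of $D_s$ by its own points, and $\mathcal{N}_\varepsilon(D_s)$ is the minimum size of such a cover, so $\mathcal{N}_\varepsilon(D_s) \le |C_s|$. Likewise $C_s$ is an $\varepsilon$-separated subset of $D_s$, and $\mathcal{P}_\varepsilon(D_s)$ is the maximum size of such a subset, so $|C_s| \le \mathcal{P}_\varepsilon(D_s)$.

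The only delicate point is the boundary convention. The stopping rule (strict versus non-strict inequality) must match the non-strict definitions in this appendix: cover with $\le \varepsilon$, separation with $\ge \varepsilon$. Stopping when $\rho_t \le \varepsilon$ yields strict separation, which satisfies the non-strict definition, so both conventions are consistent. Tie-breaking among maximizers and the choice of the first key play no role.
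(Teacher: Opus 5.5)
Your proof is correct and follows the same route as the paper. The stopping rule gives the cover property, and separation comes from examining the later of any two retained keys at the moment it was added, when the traversal had not yet stopped. The inequalities $\mathcal{N}_\varepsilon(D_s) \le |C_s| \le \mathcal{P}_\varepsilon(D_s)$ then follow from the definitions, and your explicit covering radius $\rho_t$ and boundary-convention remark add precision without changing the argument.
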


\begin{proof}[Proof of Proposition~\ref{prop:size}]
The traversal stops precisely when every key of $D_s$ lies within
$\varepsilon$ of the retained set, so $C_s$ is an $\varepsilon$-cover.
For separation, consider the moment a key was retained. It was the
farthest key from the set retained so far, and the traversal had not yet
stopped, so its distance to every earlier key was at least $\varepsilon$.
Applying this to the later of any two retained keys shows $C_s$ is
$\varepsilon$-separated. The size bounds then follow from the
definitions: an $\varepsilon$-cover has at least
$\mathcal{N}_\varepsilon(D_s)$ points, and an $\varepsilon$-separated
subset at most $\mathcal{P}_\varepsilon(D_s)$.
\end{proof}

The two ends of the bracket are within one scale factor of each other,
$\mathcal{P}_{2\varepsilon}(D) \le \mathcal{N}_\varepsilon(D) \le
\mathcal{P}_\varepsilon(D)$. The upper inequality holds because a
maximal $\varepsilon$-separated set is already an $\varepsilon$-cover,
as any key farther than $\varepsilon$ from all of it could otherwise be
added. The lower one holds because a ball of radius $\varepsilon$
contains at most one point of a $2\varepsilon$-separated set.

%% file: appendix/05_gemma.tex
\section{A non-Qwen backbone}
\label{app:gemma}

Every number in the body of the paper reads a Qwen backbone. This
appendix rebuilds the pipeline end to end on
\texttt{gemma-4-31b-it} \citep{gemmateam2026gemma4}, a backbone of
comparable size from a different family, with a different chat
template, interleaved sliding-window attention, and logit
soft-capping. Nothing is transferred from the Qwen build. The read-out
depth is selected anew on SkillRet validation and lands at block 34 of
the model's 60, the projections are retrained with the recipe of
Section~\ref{sec:exp-setup}, the cover radius is refit on the training
split at $\varepsilon = 0.77$, and the exchange rates
$(\alpha, \gamma) = (0.15, 0.003)$ are recalibrated on the validation
split. The pair sits well below the Qwen values of
Section~\ref{sec:exp-setup} because the exchange rates absorb the scale
of the signals they convert. Among the twenty candidates a validation
query shortlists, both verdict read-outs spread about three times wider
on Gemma than on Qwen3-32B (median within-query standard deviation 14.8
against 4.5 nats for $V$, 0.75 against 0.23 nats per token for $L$),
while the glance spreads about the same (0.061 against 0.052), so a
coefficient a third the size carries the same weight. The three
benchmarks are then
scored as in Section~\ref{sec:exp-static}, by the same judge under the
same rule.

\begin{figure}[htbp]
\centering
\includegraphics[width=\textwidth]{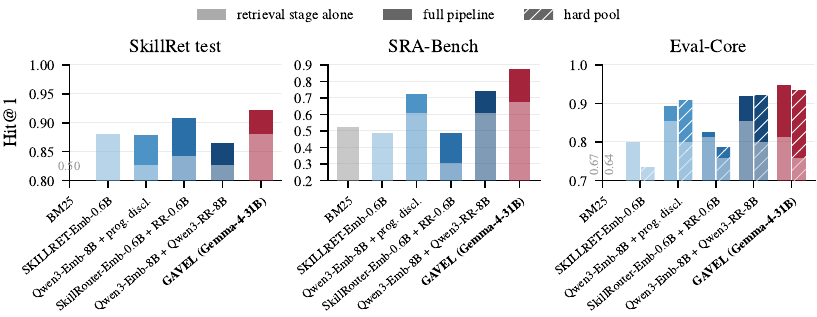}
\caption{The comparison of Figure~\ref{fig:main-results} with the
routing backbone changed to Gemma-4-31B. The baselines do not involve
that backbone, so their bars carry over from Figure~\ref{fig:main-results};
the rebuilt parts are \textsc{Gavel} itself and the in-context picker
of progressive disclosure. Bars read as before, light fill for the
retrieval stage alone, dark fill for the full pipeline, hatched for
Eval-Core's hard pool, and a below-range BM25 score printed in place
of its bar.}
\label{fig:gemma}
\end{figure}

Figure~\ref{fig:gemma} shows the result. The shape survives the family
change, and \textsc{Gavel} again posts the highest adjudicated score
on all three benchmarks. Compressing the skill banks changes its
SkillRet and SRA-Bench scores by at most 0.7 points.

%% file: appendix/06_ablations.tex
\section{Additional ablations}
\label{app:more-ablations}

\subsection{Progressive disclosure over full skill bodies}
\label{app:fullbody}

The progressive-disclosure baseline of
Section~\ref{sec:exp-static-main} shows the agent twenty names and
descriptions. Here we rerun it with the menu expanded. The shortlist
and the picker are unchanged, but each entry now carries the full
\texttt{SKILL.md} below its metadata, so the agent reads twenty
complete skills before answering with an index. With Qwen3-32B's
window extended to 131K tokens by YaRN, the twenty bodies fit for
every query of every benchmark; the longest prompt is 124K
tokens. Table~\ref{tab:fullbody} gives the scores.

\begin{table}[ht]
\centering
\small
\setlength{\tabcolsep}{3.5pt}
\caption{Progressive disclosure picking from the twenty full skill
bodies, against the metadata menu of Figure~\ref{fig:main-results}
and \textsc{Gavel}, under the same retrieval front end, picker, and
judge.}
\label{tab:fullbody}
\begin{tabular}{l cc cc cc cc}
\toprule
& \multicolumn{2}{c}{SkillRet test} & \multicolumn{2}{c}{SRA-Bench}
& \multicolumn{2}{c}{Eval-Core easy} & \multicolumn{2}{c}{Eval-Core hard}\\
\cmidrule(lr){2-3}\cmidrule(lr){4-5}\cmidrule(lr){6-7}\cmidrule(lr){8-9}
& raw & adj. & raw & adj. & raw & adj. & raw & adj.\\
\midrule
Metadata menu & .6526 & .8635 & .5633 & .6434 & .5467 & .9067 & .5467 & .8933\\
Full-body menu & .6510 & .8417 & .6353 & .7131 & .5867 & .9067 & .5333 & .8667\\
\textbf{\textsc{Gavel}} & .8459 & .9460 & .8223 & .8734 & .6933 & .9467 & .6933 & .9333\\
\bottomrule
\end{tabular}
\end{table}

The bodies help only on SRA-Bench, whose entries are procedural web
pages with uninformative titles; there both scores rise by seven
points. On SkillRet the adjudicated score falls by 2.2 points, and on
Eval-Core the changes amount to one to three queries. \textsc{Gavel}
stays 16 adjudicated points ahead on SRA-Bench and 10 on
SkillRet. Missing information does not explain the remaining errors:
on SkillRet the gold's complete body is in the prompt for 92\% of the
queries, and the picker chooses another skill in 29\% of them. The
context cost, meanwhile, is large. The median prompt grows from 1.3K
tokens to 28K--52K, and a single pick takes 26 to 124 seconds of
prefill.

\subsection{Dropping one verdict signal}
\label{app:signal-ablation}

The ablation of Figure~\ref{fig:ablations} that commits to $V$ alone
removes the glance and the likelihood together. Here we remove one
verdict signal at a time from the ruling of Eq.~(\ref{eq:ruling}),
leaving the glance, its shortlist, and the compressed banks as they
are, so that $g + \alpha L$ rules on the likelihood alone and
$g + \gamma V$ on the judgment alone.
Table~\ref{tab:signal-ablation} reports both beside
the glance alone and the full ruling, with SkillTraj pooled over its
372 trajectories.

\begin{table}[ht]
\centering
\small
\setlength{\tabcolsep}{3pt}
\caption{Dropping one verdict signal from the ruling, with the glance
alone and the full ruling for reference.}
\label{tab:signal-ablation}
\begin{tabular}{l cc cc cc cc cc}
\toprule
& \multicolumn{2}{c}{SkillRet test} & \multicolumn{2}{c}{SRA-Bench}
& \multicolumn{2}{c}{Eval-Core easy} & \multicolumn{2}{c}{Eval-Core hard}
& \multicolumn{2}{c}{SkillTraj}\\
\cmidrule(lr){2-3}\cmidrule(lr){4-5}\cmidrule(lr){6-7}\cmidrule(lr){8-9}\cmidrule(lr){10-11}
& raw & adj. & raw & adj. & raw & adj. & raw & adj. & raw & adj.\\
\midrule
Glance alone ($g$) & .7681 & .8725 & .6272 & .6957 & .6800 & .8933 & .6400 & .8667 & .7043 & .7366\\
$g + \alpha L$ & .8241 & .9266 & .7758 & .8153 & .7333 & .8400 & .7200 & .8133 & .7769 & .8172\\
$g + \gamma V$ & .6870 & .8981 & .7561 & .8351 & .6667 & .9067 & .6533 & .8933 & .7769 & .8253\\
\textbf{\textsc{Gavel}} ($g + \alpha L + \gamma V$) & .8459 & .9460 & .8223 & .8734 & .6933 & .9467 & .6933 & .9333 & .8253 & .8710\\
\bottomrule
\end{tabular}
\end{table}

The full ruling scores highest on every benchmark, ahead of the
better single-signal variant by 1.9 adjudicated points on SkillRet,
3.8 on SRA-Bench, 4.6 on SkillTraj, and three queries on each
Eval-Core pool. The two signals do not lift the glance evenly. The
judgment raises it on every library, whereas the likelihood raises it
by 5.4 points on SkillRet, 12.0 on SRA-Bench, and 8.1 on SkillTraj
but lowers it by four queries on each Eval-Core pool. Which single
signal is worth more depends on the library. The likelihood carries
SkillRet, whose queries spell out the task in enough detail for
the right skill to anticipate them closely. The judgment instead carries SRA-Bench and SkillTraj, where the
question is whether a procedural page or a mid-rollout state calls for
the skill, which the model answers better when asked than when made to
predict the task's wording. The two signals miss different queries,
and the product of experts keeps the union of what each gets right.

\subsection{Replacing the glance with an off-the-shelf embedder}
\label{app:emb-ablation}

The Jina-ColBERT-v2 variant of Figure~\ref{fig:ablations} replaces
the glance with a late-interaction retriever and loses on every
benchmark. Here the replacement is Qwen3-Embedding-8B, the largest
retrieval stage of Figure~\ref{fig:main-results}. It nominates the
twenty candidates and its cosine similarity stands in for $g$ in the
ruling of Eq.~(\ref{eq:ruling}), with $(\alpha, \gamma)$ calibrated
on the validation split exactly as ours are, while the verdict passes
are unchanged. Table~\ref{tab:emb-ablation} reports the variant
beside the embedder alone, the embedder followed by Qwen3-Reranker-8B
as in Figure~\ref{fig:main-results}, the glance alone, and the full
ruling, with SkillTraj pooled over its 372 trajectories and the
embedder given the last message, as in Figure~\ref{fig:traj}.

\begin{table}[ht]
\centering
\small
\setlength{\tabcolsep}{3pt}
\caption{Replacing the glance with Qwen3-Embedding-8B, with the
embedder alone, the embedder under Qwen3-Reranker-8B, the glance
alone, and the full ruling for reference.}
\label{tab:emb-ablation}
\begin{tabular}{l cc cc cc cc cc}
\toprule
& \multicolumn{2}{c}{SkillRet test} & \multicolumn{2}{c}{SRA-Bench}
& \multicolumn{2}{c}{Eval-Core easy} & \multicolumn{2}{c}{Eval-Core hard}
& \multicolumn{2}{c}{SkillTraj}\\
\cmidrule(lr){2-3}\cmidrule(lr){4-5}\cmidrule(lr){6-7}\cmidrule(lr){8-9}\cmidrule(lr){10-11}
& raw & adj. & raw & adj. & raw & adj. & raw & adj. & raw & adj.\\
\midrule
Qwen3-Emb-8B alone & .6540 & .8273 & .5528 & .6074 & .5600 & .8533 & .5200 & .8000 & .5511 & .5806\\
\quad + Qwen3-Reranker-8B & .6918 & .8645 & .6458 & .7398 & .6667 & .9200 & .6533 & .9200 & .6774 & .6935\\
\quad + verdict & .7232 & .9168 & .7956 & .8513 & .6400 & .9333 & .6400 & .9200 & .7339 & .7796\\
\addlinespace
Glance alone ($g$) & .7681 & .8725 & .6272 & .6957 & .6800 & .8933 & .6400 & .8667 & .7043 & .7366\\
\textbf{\textsc{Gavel}} & .8459 & .9460 & .8223 & .8734 & .6933 & .9467 & .6933 & .9333 & .8253 & .8710\\
\bottomrule
\end{tabular}
\end{table}

The verdict lifts the embedder's shortlist by 9.0 adjudicated points
on SkillRet, 24.4 on SRA-Bench, and 19.9 on SkillTraj. On that
same shortlist it outscores Qwen3-Reranker-8B on the three larger
libraries, by 5.2, 11.2, and 8.6 points, and matches it on Eval-Core.
The variant
nevertheless stays behind \textsc{Gavel} on every benchmark, by 2.9
adjudicated points on SkillRet, 2.2 on SRA-Bench, one query on each
Eval-Core pool, and 9.1 on SkillTraj, where the embedder has to be
handed the last message while the glance reads the rollout as it
goes.

The glance's edge over the embedder goes beyond this margin, since
the two draw their signal from different places. An embedder is a
second model that has to run its own forward pass over every task
before it can rank anything, whereas the glance reads the states the
agent produces as it decodes, so the shortlist costs one sweep of the
head over the bank and nothing more. The embedder also stays what it
was trained to be, while the glance is read out of the agent and
climbs with its size (Section~\ref{sec:exp-scaling}). As the agent grows more
capable, the glance alone may come to settle the routing, ranking the
whole library in passing while the model works on the task, with no
verdict behind it.

%% file: appendix/07_prompts.tex
\section{Prompt templates}
\label{app:prompts}

This appendix collects the prompts the experiments run: the three
templates \textsc{Gavel} feeds the backbone, the two baseline forms
that place skill text in the context instead, and the prompt of the
adjudicated Hit@1. Placeholders stand in braces. Every template is
wrapped in the backbone's chat template in non-thinking mode, whose
control tokens are shown where their exact position matters.

\begin{tcolorbox}[colback=red!4!white, colframe=red!55!black,
  fonttitle=\bfseries, title={The skill render $r(s)$}]
\small
\texttt{<|im\_start|>user}\\
Here is an agent skill (SKILL.md):\\
name: \{name\}\\
description: \{description\}

\{body\}

Here is a user task:
\end{tcolorbox}

At installation the glance runs one forward pass over this render and
stores a key for every token of the metadata header and body. The
body enters in full; the only truncation is the 30{,}720-token
serving budget.

\begin{tcolorbox}[colback=red!4!white, colframe=red!55!black,
  fonttitle=\bfseries, title={The task render}]
\small
\texttt{<|im\_start|>user}\\
\{task\}\texttt{<|im\_end|>}\\
\texttt{<|im\_start|>assistant}\\
\texttt{<think>}

\texttt{</think>}
\end{tcolorbox}

On the written-task benchmarks the glance reads the task tokens'
states from this render. Mid-rollout there is no template of ours:
the glance reads the live context exactly as the harness renders it.

\begin{tcolorbox}[colback=red!4!white, colframe=red!55!black,
  fonttitle=\bfseries, title={The verdict continuation}]
\small
\texttt{<|im\_start|>user}\\
Here is an agent skill (SKILL.md):\\
name: \{name\}\\
description: \{description\}

\{body\}

Here is a user task: \{task\}

Does this skill provide what that task needs? Answer yes or
no:\texttt{<|im\_end|>}\\
\texttt{<|im\_start|>assistant}\\
\texttt{<think>}

\texttt{</think>}
\end{tcolorbox}

The continuation extends $r(s)$ token for token. $L$ is the mean
log-likelihood over the task span, and $V$ is read at the final
position with spelling sets
$\mathcal{Y} = \{\texttt{yes}, \texttt{Yes}, \texttt{YES}\}$ and
$\mathcal{N} = \{\texttt{no}, \texttt{No}, \texttt{NO}\}$, each
spelling taken with and without a leading space, all single tokens in
the backbone's vocabulary.

\begin{tcolorbox}[colback=blue!5!white, colframe=blue!75!black,
  fonttitle=\bfseries, title={Progressive disclosure: the metadata
  menu}]
\small
\{task\}

Here is the list of agent skills available to you. Each line is one
skill, written as \texttt{`index. name: description`}.

1. \{name\}: \{description\}\\
2. \{name\}: \{description\}\\
$\cdots$

Which one of the skills above should be loaded to do this task?
Reply with its index number and nothing else.
\end{tcolorbox}

Descriptions enter untruncated. On the written-task benchmarks the
arm decodes greedily and commits to the first in-range integer in the
output; a reply that names no candidate counts as a miss. On
SkillTraj the trajectory context stands in for the task. There the labels
widen to zero-padded \texttt{01}--\texttt{20} so that none is a
prefix of another, and the prompt closes with \texttt{Answer: }.
Decoding is constrained to the label tokens, so the arm always
commits to a candidate.

\begin{tcolorbox}[colback=blue!5!white, colframe=blue!75!black,
  fonttitle=\bfseries, title={Ablation: the skill body in the
  generation}]
\small
\{task\}

Here is an agent skill (SKILL.md):\\
name: \{name\}\\
description: \{description\}

\{body\}

Does this skill provide what that task needs? Answer yes or no:
\end{tcolorbox}

The variant of Section~\ref{sec:exp-static-ablation} reads from this
reversed order the mean log-likelihood of the body span and the same
yes/no log-odds at the final position.

\begin{tcolorbox}[breakable, colback=black!3!white, colframe=black!65,
  fonttitle=\bfseries, title={The adjudication prompt}]
\small
You are judging a skill-routing decision for an AI coding agent.

Below is a task, followed by two candidate skill documents, A and B.
Exactly one of them will be loaded FIRST to help the agent carry out
this task.

Decide which skill is the better first choice for THIS task.

Rules:
\begin{itemize}
\item Judge by whether the skill provides the specific capability the
task actually needs -{}- not by topical similarity, writing quality,
or document length.
\item Honor hard requirements stated in the task (required tools or
libraries, data access, output formats, explicit instructions). A
skill that cannot satisfy them cannot win.
\item A skill that is merely about the task's topic but cannot
perform the needed work loses to one that can.
\item The task may need several skills overall; judge only which of
these two is the more useful FIRST pick.
\item Answer ``tie'' if both serve the task about equally well -{}-
including the case where the two documents are near-identical copies
of the same skill.
\end{itemize}

[TASK]\\
\{task\}

[SKILL A]\\
name: \{name\_a\}\\
description: \{desc\_a\}

\{body\_a\}

[SKILL B]\\
name: \{name\_b\}\\
description: \{desc\_b\}

\{body\_b\}

Answer with a single JSON object and nothing else:\\
\texttt{\{"winner": "A" | "B" | "tie", "reason": "<one sentence>"\}}
\end{tcolorbox}

The judge fills A with the committed skill and B with a gold, then
once more with the roles swapped, and is never told which side is
which.

%% file: appendix/08_benchmarks.tex
\section{Benchmark details and disclosures}
\label{app:benchmarks}

Section~\ref{sec:exp-static} scores every router by adjudicated
Hit@1. This appendix shows the under-annotation that calls for it,
pairs every adjudicated score with its raw one, checks the judge
against human labels, reports the candidate-pool recall of every
retrieval stage, and details the construction of SkillTraj. Eval-Core
is scored on both pools its authors ship, the easy pool of 78{,}361
documents and the hard pool, which adds 780 adversarial distractors to
it.

\subsection{Under-annotated golds: examples from all three benchmarks}
\label{app:under-annotation}

The three libraries are collected from public skill repositories and
the open web, where the same capability routinely exists as more than
one document. The gold annotations name one of them, and a router
that returns a different but fully adequate skill is counted wrong by
raw Hit@1. The three examples below, one per benchmark, show what
such a miss looks like. The next subsection reports, system by system,
how far raw Hit@1 sits below the adjudicated score.

The mildest form is a library that holds the same skill twice. The
SkillRet library contains two skills named \texttt{mcp-builder},
scraped from the repositories \texttt{@xenitV1/Antigravity-Workflows}
and \texttt{@jscraik/Agent-Skills}, and the annotation lists only the
former. For the query below our router returns the other one, and the
judge prefers it in both orders, calling the annotated gold
``generic'' with ``outdated transport guidance''.

\begin{tcolorbox}[breakable, colback=black!3!white, colframe=black!65,
  fonttitle=\bfseries,
  title={SkillRet: two copies of one skill, one annotated}]
\small
\textbf{Query} (\texttt{q-00425})\\[3pt]
I want to create a server that exposes our internal ``FieldNotes''
geospatial survey dataset (stored in PostGIS) so that Claude and
other LLM-based agents can query it through the Model Context
Protocol. [\dots] Can you design the full MCP server architecture for
this --- including which capabilities should be tools vs.\ resources,
how to structure the URI scheme for individual records, input
validation best practices for the geo queries, proper error handling
patterns, and a reference implementation in TypeScript
\tcblower
\textbf{Annotated gold}\\[3pt]
name: \textbf{mcp-builder}\\
description: MCP (Model Context Protocol) server building principles.
Tool design, resource patterns, best practices.\\[6pt]
\textbf{Committed skill} (counted as a raw miss)\\[3pt]
name: \textbf{mcp-builder}\\
description: Create MCP (Model Context Protocol) servers that enable
LLMs to interact with external services through well-designed tools,
resources, and prompts. [\dots]
\end{tcolorbox}

SRA-Bench pads its 636 curated skills with 25{,}626 documents crawled
from the web, meant as distractors, and only the curated set is
annotated. The distractors are real skill documents, and some of them
serve the tasks. The task below asks for an SVM classification with
scikit-learn. Its golds are a scikit-learn reference and a guide to
Python's \texttt{warnings} module from the curated set. Our router
returns a scikit-learn skill from the distractor pool, and the judge
sides with it in all four comparisons.

\begin{tcolorbox}[breakable, colback=black!3!white, colframe=black!65,
  fonttitle=\bfseries,
  title={SRA-Bench: an adequate skill among the distractor documents}]
\small
\textbf{Query} (\texttt{bigcodebench\_00859})\\[3pt]
Perform an \textbf{SVM classification} of the iris dataset and warn if
the accuracy is less than 0.9. The warning action is set to `always'.
The test size for the train-test split is 0.33. [\dots the function
signature and output contract follow]
\tcblower
\textbf{Annotated golds}\\[3pt]
name: \textbf{scikit-learn} --- Practical Reference for Preprocessing,
Regression, Clustering, and Vectorization\\
description: Guide for correct sklearn API usage: scalers, PCA,
regression, clustering, text vectorization, and common pitfalls.\\[4pt]
name: warnings --- simplefilter, warn, glob extensions, shutil.move\\
description: Guide for Python warnings module: simplefilter placement,
conditional flags, file transfer error patterns, and distinction from
logging.warning.\\[6pt]
\textbf{Committed skill} (\texttt{web\_22564}, counted as a raw miss)\\[3pt]
name: \textbf{scikit-learn}\\
description: The industry standard library for machine learning in
Python. Provides simple and efficient tools for predictive data
analysis, covering classification, regression, clustering,
dimensionality reduction, model selection, and preprocessing.
\end{tcolorbox}

The extreme form is a verbatim copy. In Eval-Core's 78K pool the gold
skill for the task below, \texttt{rdkit}, appears a second time under
a different identifier. The two documents share their description
word for word and their bodies differ only marginally (4-gram Jaccard
$.92$). Our router returns the copy, raw Hit@1 counts a miss, and the
judge's verdict on the pair opens with ``Both documents provide
essentially identical RDKit guidance''.

\begin{tcolorbox}[breakable, colback=black!3!white, colframe=black!65,
  fonttitle=\bfseries,
  title={Eval-Core: a verbatim copy of the gold in the pool}]
\small
\textbf{Query} (\texttt{find-topk-similiar-chemicals}, sic)\\[3pt]
Find the top k similar chemicals in \texttt{molecules.pdf} to any
chemicals you are given. For converting chemical names into molecular
representations, you need to use an external chemistry resources like
PubChem or RDKit. For computing similarity, use Morgan fingerprints
with Tanimoto similarity (radius = 2, include chirality). [\dots]
\tcblower
\textbf{Annotated gold} (\texttt{gt/rdkit})\\[3pt]
name: \textbf{rdkit}\\
description: Cheminformatics toolkit for fine-grained molecular
control. SMILES/SDF parsing, descriptors (MW, LogP, TPSA),
fingerprints, substructure search, 2D/3D generation, similarity,
reactions. [\dots]\\[6pt]
\textbf{Committed skill}
(\texttt{development/rdkit-sanand0-scientific-research}, counted as a
raw miss)\\[3pt]
name: \textbf{rdkit}\\
description: identical, word for word.
\end{tcolorbox}

\subsection{Adjudication: raw scores and human agreement}
\label{app:raw-scores}

Tables~\ref{tab:raw-main}--\ref{tab:raw-traj} pair every adjudicated
score plotted in Figures~\ref{fig:main-results}, \ref{fig:ablations}
and~\ref{fig:traj} with the same system's raw Hit@1, which credits
only the annotated golds. Adjudication lifts every system, and on
every benchmark it lifts a baseline more than \textsc{Gavel}.
Progressive disclosure gains 21.1 points on SkillRet and 36.0 on
Eval-Core easy against \textsc{Gavel}'s 10.0 and 25.3, and on
SRA-Bench the 8B pipeline gains 9.4 against 5.1. The comparisons the
paper rests on do not depend on the judge either. In every column of
the three tables the raw Hit@1 of the full \textsc{Gavel} pipeline is
above every other row's, the four ablated variants included. The
judge sees only the task and the two documents, and a top-1 matching
an annotated gold is credited without a judge call.

We validate the judge against human annotation on 200 pairs, and the
two agree on the credit decision on 171 of them (86\%). The unit is
one judged pair, a task together with the non-gold skill a system
committed and one annotated gold, which the judge compared in both
orders. Aggregating the two orders sorts each pair into three
classes, depending on whether the committed skill wins more orders
than the gold, the two split or tie, or the gold wins more. We
stratify the judged pairs by benchmark and by this verdict and draw
200 of them, 50 per benchmark, balanced across the three classes as
far as the class counts allow. SkillTraj has only 14 pairs where the
judge prefers the committed skill, and the shortfall goes to the
largest class. The authors label the pairs
under the judge's own instructions. They see only the task and the two
skills in randomized order, blind to which is the annotated gold and
to what the judge said, and pick the better first load or call the two
equal. Table~\ref{tab:judge-human} cross-tabulates the two verdicts.

\begin{table}[ht]
\centering
\small
\caption{Human agreement with the adjudicating judge on 200 pairs,
stratified by benchmark and by the judge's verdict. Rows are the
judge's two-order verdict on a (task, committed skill, gold) pair;
columns are the authors' blind verdict on the same pair.}
\label{tab:judge-human}
\begin{tabular}{l c ccc c}
\toprule
& & \multicolumn{3}{c}{Human verdict} & \\
\cmidrule(lr){3-5}
Judge's verdict & $n$ & committed & equal & gold & agreement\\
\midrule
Committed skill better & 65 & 42 & 18 & 5 & .646\\
Equal & 68 & 16 & 38 & 14 & .559\\
Gold better & 67 & 3 & 7 & 57 & .851\\
\bottomrule
\end{tabular}
\end{table}

Where the two disagree they rarely point in opposite directions. On
the 132 pairs where the judge picked a side, the human label lands on
the opposite side 8 times. Of the 68 pairs the judge tied, the
annotators tie 38 and give 14 to the gold. Since the crediting
rule counts a tie as credit, those 14 are credited against the human
label. In all, the annotators would withhold credit on 19 pairs the
judge credited and grant it on 10 the judge withheld, a lean toward
generosity of 9 pairs in 200. We treat the human labels as a check on
the judge and no more. A skill document runs to thousands of tokens
of instructions, scripts, and reference material, and whether it
serves a task can turn on a detail deep in the body that an annotator
reads less closely than the model does. A disagreement therefore does
not by itself say which side erred.

\begin{table}[htbp]
\centering
\small
\setlength{\tabcolsep}{3.5pt}
\caption{Raw Hit@1 beside the adjudicated scores of
Figure~\ref{fig:main-results}. Indented rows are the retrieval stage
committing its top-1 alone, the light bar segments of the figure.}
\label{tab:raw-main}
\begin{tabular}{l cc cc cc cc}
\toprule
& \multicolumn{2}{c}{SkillRet test} & \multicolumn{2}{c}{SRA-Bench}
& \multicolumn{2}{c}{Eval-Core easy} & \multicolumn{2}{c}{Eval-Core hard}\\
\cmidrule(lr){2-3}\cmidrule(lr){4-5}\cmidrule(lr){6-7}\cmidrule(lr){8-9}
& raw & adj. & raw & adj. & raw & adj. & raw & adj.\\
\midrule
BM25 & .3786 & .5007 & .4820 & .5226 & .5600 & .6667 & .5200 & .6400\\
SKILLRET-Emb-0.6B & .7699 & .8797 & .4204 & .4855 & .6267 & .8000 & .5733 & .7333\\
\addlinespace
Qwen3-Emb-8B + prog.\ discl. & .6526 & .8635 & .5633 & .6434 & .5467 & .9067 & .5467 & .8933\\
\quad\emph{retrieval stage alone} & .6540 & .8273 & .5528 & .6074 & .5600 & .8533 & .5200 & .8000\\
\addlinespace
SkillRouter-Emb-0.6B + RR-0.6B & .8301 & .9077 & .4135 & .4866 & .6667 & .8267 & .6133 & .7867\\
\quad\emph{retrieval stage alone} & .7416 & .8427 & .2462 & .3031 & .6000 & .8133 & .5333 & .7600\\
\addlinespace
Qwen3-Emb-8B + Qwen3-RR-8B & .6918 & .8645 & .6458 & .7398 & .6667 & .9200 & .6533 & .9200\\
\quad\emph{retrieval stage alone} & .6540 & .8273 & .5528 & .6074 & .5600 & .8533 & .5200 & .8000\\
\addlinespace
\textbf{\textsc{Gavel} (ours)} & .8459 & .9460 & .8223 & .8734 & .6933 & .9467 & .6933 & .9333\\
\quad\emph{retrieval stage alone} & .7681 & .8725 & .6272 & .6957 & .6800 & .8933 & .6400 & .8667\\
\bottomrule
\end{tabular}
\end{table}

\begin{table}[htbp]
\centering
\small
\setlength{\tabcolsep}{3.5pt}
\caption{Raw Hit@1 beside the adjudicated scores of
Figure~\ref{fig:ablations}. Every row is the full pipeline with one
design choice replaced.}
\label{tab:raw-ablations}
\begin{tabular}{l cc cc cc cc}
\toprule
& \multicolumn{2}{c}{SkillRet test} & \multicolumn{2}{c}{SRA-Bench}
& \multicolumn{2}{c}{Eval-Core easy} & \multicolumn{2}{c}{Eval-Core hard}\\
\cmidrule(lr){2-3}\cmidrule(lr){4-5}\cmidrule(lr){6-7}\cmidrule(lr){8-9}
& raw & adj. & raw & adj. & raw & adj. & raw & adj.\\
\midrule
\textbf{\textsc{Gavel} (reference)} & .8459 & .9460 & .8223 & .8734 & .6933 & .9467 & .6933 & .9333\\
\addlinespace
Glance $\to$ Jina-ColBERT-v2 & .5968 & .7803 & .6690 & .7096 & .6400 & .8533 & .6400 & .8400\\
Product of experts $\to$ pointwise rerank & .4068 & .7170 & .6109 & .7201 & .5333 & .8267 & .5333 & .8400\\
Verdict $\to$ skill body in generation & .7835 & .9081 & .7108 & .7782 & .6800 & .8400 & .6800 & .8400\\
Verdict $\to$ progressive disclosure & .7354 & .9001 & .5958 & .6945 & .6000 & .8800 & .6000 & .8667\\
\bottomrule
\end{tabular}
\end{table}

\begin{table}[htbp]
\centering
\small
\setlength{\tabcolsep}{3.5pt}
\caption{Raw Hit@1 beside the adjudicated scores of
Figure~\ref{fig:traj}, per scenario.}
\label{tab:raw-traj}
\begin{tabular}{l cc cc cc cc}
\toprule
& \multicolumn{2}{c}{user} & \multicolumn{2}{c}{tool}
& \multicolumn{2}{c}{agent} & \multicolumn{2}{c}{wrong-skill}\\
& \multicolumn{2}{c}{request} & \multicolumn{2}{c}{evidence}
& \multicolumn{2}{c}{plan} & \multicolumn{2}{c}{recovery}\\
\cmidrule(lr){2-3}\cmidrule(lr){4-5}\cmidrule(lr){6-7}\cmidrule(lr){8-9}
& raw & adj. & raw & adj. & raw & adj. & raw & adj.\\
\midrule
SkillRouter-Emb-0.6B + RR-0.6B\\
\quad\emph{full context} & .4741 & .4828 & .1132 & .1226 & .2095 & .2095 & .2000 & .2000\\
\quad\emph{last message} & .8621 & .8707 & .3019 & .3396 & .4286 & .4381 & .6667 & .6667\\
\addlinespace
Qwen3-Emb-8B + Qwen3-RR-8B\\
\quad\emph{full context} & .6897 & .7069 & .4057 & .4434 & .4762 & .4952 & .5111 & .5111\\
\quad\emph{last message} & .8707 & .8793 & .4717 & .5094 & .6762 & .6857 & .6667 & .6667\\
\addlinespace
Glance + prog.\ disclosure & .7500 & .7586 & .4528 & .5283 & .1429 & .1524 & .7556 & .7778\\
\textbf{\textsc{Gavel} (ours)} & .9483 & .9655 & .6509 & .7264 & .8476 & .9048 & .8667 & .8889\\
\bottomrule
\end{tabular}
\end{table}

\subsection{Candidate-pool recall of the retrieval stages}
\label{app:pool-recall}

Table~\ref{tab:pool-recall} reports every retrieval front end of
Figures~\ref{fig:main-results} and~\ref{fig:ablations} on its own: raw
Hit@1 of its top-1, and R@20, the fraction of queries whose top-20
candidate pool contains an annotated gold. A second stage only reorders
this pool, so R@20 caps the raw Hit@1 of any pipeline built on the
front end. Both columns credit annotated golds only, and outside
Eval-Core, where 75 queries leave more room for chance, they order the
front ends largely the same way.

\begin{table}[htbp]
\centering
\small
\setlength{\tabcolsep}{3.5pt}
\caption{Raw Hit@1 and top-20 candidate-pool recall of the retrieval
stages behind Figures~\ref{fig:main-results} and~\ref{fig:ablations}.}
\label{tab:pool-recall}
\begin{tabular}{l cc cc cc cc}
\toprule
& \multicolumn{2}{c}{SkillRet test} & \multicolumn{2}{c}{SRA-Bench}
& \multicolumn{2}{c}{Eval-Core easy} & \multicolumn{2}{c}{Eval-Core hard}\\
\cmidrule(lr){2-3}\cmidrule(lr){4-5}\cmidrule(lr){6-7}\cmidrule(lr){8-9}
& Hit@1 & R@20 & Hit@1 & R@20 & Hit@1 & R@20 & Hit@1 & R@20\\
\midrule
BM25 & .3786 & .6410 & .4820 & .8235 & .5600 & .7467 & .5200 & .7467\\
Jina-ColBERT-v2 & .4156 & .7226 & .4483 & .7445 & .4667 & .7867 & .4533 & .7867\\
SKILLRET-Emb-0.6B & .7699 & .9680 & .4204 & .7008 & .6267 & .8933 & .5733 & .8933\\
SkillRouter-Emb-0.6B & .7416 & .9490 & .2462 & .6225 & .6000 & .8133 & .5333 & .8000\\
Qwen3-Emb-8B & .6540 & .9172 & .5528 & .9117 & .5600 & .8533 & .5200 & .8533\\
\addlinespace
\textbf{Glance (ours)} & .7681 & .9706 & .6272 & .9675 & .6800 & .8800 & .6400 & .8533\\
\bottomrule
\end{tabular}
\end{table}

\subsection{SkillTraj: construction and examples}
\label{app:skilltraj}

Each trajectory starts from its gold skill, drawn from the held-out
SkillRet library of 6{,}660 with at most two trajectories per skill.
GPT-5.6 Sol then writes the full dialogue against a scenario brief. The
setting is a plain tool-using assistant whose system prompt lists no
skills; a trajectory spans two to eight user turns and up to five tool
calls with their results. Alongside the dialogue the generator returns machine-checkable anchors. One is the marked routing point, and when that point sits inside an assistant message, the other is the verbatim sentence that creates it. Both are verified programmatically. The sentence must
appear in the message exactly, and the prefix up to the routing point
must tokenize identically to the full rendering, so the point is a
real token position of the live context.

Admission is judged blind. The judge, Claude Sonnet 4.6, sees only
prefixes truncated at the point under test, never the continuation, and the two
views of a before-versus-after check are judged independently, in
random order. The bar is the one stated in the main text. The need must exceed what a generic assistant can serve, or depend on the skill's specific mechanisms. The gold must also solve it when shown among its nine nearest library neighbors with identities hidden. If the judge finds a neighbor that serves the need as well, the trajectory is dropped or the neighbor is recorded as a second gold.
The leakage screen bans the gold's name and its variants outright. It also runs $n$-gram and longest-common-substring checks against the gold's body over every message, tool argument, and tool result, and sends trajectories with long verbatim overlap to a blind audit. A
trajectory that fails any check is regenerated from scratch, never
patched. The rest of this subsection walks the four scenarios, each
with an excerpt from a real trajectory around its routing point.

\textbf{User request.} The need arrives inside a user message and the
routing point closes that turn. Across trajectories that turn ranges from the opening request to late in the dialogue. In the example below it lands
three exchanges in, after the dialogue has built up context that has
nothing to do with the skill:

\begin{tcolorbox}[colback=black!3!white, colframe=black!60,
  fonttitle=\bfseries, title={User request}]
\textbf{User:} I'm organizing a two-day staff retreat for a 12-person
nonprofit team. [...] Can you suggest a theme and a short tagline that
feel warm but not cheesy?

{\color{gray}\emph{[The assistant answers; two more exchanges polish
the invitation and sketch the run-of-show.]}}

\textbf{User:} Now build the actual Obsidian .canvas file for this
retreat, not a Mermaid diagram or an image. Return only the file
contents. Use the native nodes-and-edges object structure, with
separate group objects for Friday and Saturday placed before their
contained items so the backgrounds stay underneath. [...] Connect the
agenda in chronological order with side-specific anchors and
arrowheads, including the overnight transition from Friday to
Saturday. {\color{gray}$\triangleleft$\,\emph{routing point}}
\tcblower
\textbf{Gold skill} \texttt{json-canvas} --- Create and edit JSON
Canvas files (.canvas) with nodes, edges, groups, and connections. Use
when working with .canvas files, creating visual canvases, mind maps,
flowcharts, or when the user mentions Canvas files in Obsidian.
\end{tcolorbox}

\textbf{Tool evidence.} A tool result reveals the need. The generator
quotes the revealing span verbatim, and the judge reads the trajectory
truncated just before and just after the result; the need must be
absent in the first reading and present in the second. In the example, the user's request to diagnose a failed benchmark run could still be ordinary debugging, and only the fetched report reveals a hard-exploration problem:

\begin{tcolorbox}[colback=black!3!white, colframe=black!60,
  fonttitle=\bfseries, title={Tool evidence}]
{\color{gray}\emph{[Two earlier exchanges cover the repository's
setup and a passing test run.]}}

\textbf{User:} The overnight benchmark report is published at
\nolinkurl{metrics.example.net/runs/nightly-1842}. Diagnose why it missed
the target, implement a concrete fix in this repository, and validate
the change.

\textbf{Assistant:} I'll retrieve the benchmark record, then trace the
relevant training path and apply the fix.
{\color{gray}\emph{[calls]}} \texttt{fetch\_url}

\textbf{Tool:} Nightly run 1842 [...] Benchmark:
Labyrinth-Key-Image-v1. Budget: 2,000,000 environment steps. Target:
success\_rate $\geq$ 0.25. Result: success\_rate = 0.00. External
reward: 0 on every transition except +1 after collecting a hidden key
and reaching the locked exit. [...]
{\color{gray}$\triangleleft$\,\emph{routing point}}
\tcblower
\textbf{Gold skill} \texttt{exploration-strategies} --- Master
$\varepsilon$-greedy, UCB, curiosity-driven, RND, intrinsic motivation
exploration.
\end{tcolorbox}

\textbf{Agent plan.} No new user or tool information intervenes. The agent's own working text spells out the sub-task in one sentence, and
the routing point is that sentence's final token, in the middle of the
message. The truncations at the sentence's two ends are judged as in
tool evidence. In the example nothing the user asks calls for a skill
on its own; the need appears when the agent decides its draft should
fold in the team's recorded lessons:

\begin{tcolorbox}[colback=black!3!white, colframe=black!60,
  fonttitle=\bfseries, title={Agent plan}]
\textbf{User:} Help me put together a 45-minute retrospective agenda
for our billing migration rollout.

{\color{gray}\emph{[The assistant proposes a structure and asks about
attendees and tone.]}}

\textbf{User:} It'll be eight engineers, the PM, and our support
lead. Keep it constructive and fairly direct. The rollout finished two
weeks late, but there were no customer-facing outages.

\textbf{Assistant:} That group is small enough for a structured
discussion [...] I'll keep the opening neutral, spend more time on the
delay than on celebration, and end with a short list of owned actions.
Before I finalize it, I should search the Markdown files in the
vault's Areas/AI/Memory directory by both note text and frontmatter
concept tags, then prioritize any billing-migration matches by
importance so the retrospective incorporates the team's previously
recorded lessons. {\color{gray}$\triangleleft$\,\emph{routing point}}
[...]
\tcblower
\textbf{Gold skill} \texttt{recall} --- Search Obsidian memories by
keyword and concept.
\end{tcolorbox}

\textbf{Wrong-skill recovery.} The user establishes the need, but the
assistant misdiagnoses it and opens the SKILL.md of a wrong skill, whose body enters the context verbatim as the tool result. The wrong skill is sampled from the gold's nearest library neighbors, excluding near-copies of the gold. The assistant tries
to apply it, fails, and writes a recovery sentence naming the
capability it actually needs, which is the routing point. The judge
additionally confirms that the document already in context cannot
serve the need:

\begin{tcolorbox}[colback=black!3!white, colframe=black!60,
  fonttitle=\bfseries, title={Wrong-skill recovery}]
{\color{gray}\emph{[An earlier exchange names a git branch for the
change at hand.]}}

\textbf{User:} Before I add a background job that retries failed
invoice exports, find the strongest existing example in our codebase.
I need the exact file and line references, the standard implementation
flow and conventions it demonstrates, common pitfalls, and the
matching testing approach so I can follow the project's established
pattern.

\textbf{Assistant:} I'll focus specifically on an established
background-job example rather than giving generic queueing advice.
[...] {\color{gray}\emph{[loads the skill]}}
\texttt{anti-duplication}

\textbf{Loaded skill:} name: anti-duplication [...] description:
Before implementing new code (endpoints, components, services,
models), search the codebase for existing patterns to reuse. [...]

\textbf{Assistant:} The guidance I opened is aimed at preventing
duplicate implementations [...] That does not identify and rank the
canonical background-job example [...] What is actually needed is to
locate the codebase's canonical background-job implementation and
return exact line references, its standard flow, conventions,
pitfalls, and matching test examples.
{\color{gray}$\triangleleft$\,\emph{routing point}} [...]
\tcblower
\textbf{Gold skill} \texttt{pattern-finder} --- Find implementation
examples and design patterns in the codebase. Use when user asks ``how
to implement X'', ``how does this project do X'', ``show me examples
of X'', ``where is X implemented'', or needs to follow existing code
conventions.
\end{tcolorbox}

% TODO (rest of app:benchmarks): remaining disclosure items.

%% file: appendix/09_deployment.tex
\section{Deployment walkthroughs}
\label{app:harness}

This appendix walks through transcripts from the Skill-Use rollouts
behind Table~\ref{tab:skilluse}, run in
mini-swe-agent\footnote{\url{https://github.com/SWE-agent/mini-swe-agent}}
and reproduced verbatim. The frontier models' rows in that table come
from the Skill-Use paper, which reports each model in both Codex and Claude Code. We quote the Codex numbers, which are the higher of the two for every model. Of the four, DeepSeek-V4-Pro (1.6T total and 49B active
parameters), GLM-5.1 (754B and 40B) and MiniMax-M3 are released as
open-weight mixture-of-experts models, while Qwen3.6-Max is served only
through an API and its size is not disclosed. The gate is read
at every decoded token, so a firing interrupts the message wherever it
happens to stand. On a load, the half-written message stays in the
context exactly as far as it got, the chosen skill's full SKILL.md is
appended as a \texttt{[skill loaded: $\ldots$]} message, and the
assistant turn starts over with the skill in view. Across the 177 tasks the router commits 181 loads. Most fire within the opening sentence of the first response, and seventeen fire in a later turn, after command output has entered the context. The interrupted position is
marked {\color{gray}$\triangleleft$\,\emph{gate fires}}.

In the first transcript the agent sets out on its own, even guessing
the right tool. The opening attempt fails on a wrong path, and the
gate stays quiet through the search that follows. It fires the moment the agent reports that it has found the form and turns to filling it in:

\begin{tcolorbox}[colback=black!3!white, colframe=black!60,
  fonttitle=\bfseries, title={Filling a W-4 form}]
\textbf{User:} A new employee onboarding needs to fill out the IRS W-4
form. The form template is at \texttt{forms/fw4.pdf} [...] I have
already prepared the employee information FDF data file
\texttt{employee\_w4.fdf}. Please use this data to fill the W-4 form.
Note: after filling, the employee still needs to sign and confirm
themselves, so do not lock the form; keep it editable. [...]

\textbf{Assistant:} To fill the W-4 form using the provided FDF data
file, I will use the \texttt{pdftk} tool. [...]\\
\texttt{pdftk forms/fw4.pdf fill\_form employee\_w4.fdf output
fw4\_filled.pdf}

\textbf{Output:} Error: Unable to find file. Error: Failed to open
input PDF file: \texttt{forms/fw4.pdf}

{\color{gray}\emph{[An \texttt{ls forms/} fails the same way; a
\texttt{find} then locates the form. The gate stays quiet through all
three observations.]}}

\textbf{Assistant:} I have located the \texttt{fw4.pdf} file in the
\texttt{./assets/SF} {\color{gray}$\triangleleft$\,\emph{gate fires}}
\tcblower
\textbf{Loaded} \texttt{pdftk-server} (gold) --- ``Skill for using the
command-line tool pdftk (PDFtk Server) for working with PDF files. Use
when asked to [...] fill PDF forms [...]'' The restarted turn
completes the interrupted sentence, re-issues \texttt{fill\_form} on
the found path, and the filled form is saved unflattened, editable as
the task requires.
\end{tcolorbox}

In the next transcript the gate fires as the reply is about to name
the need:

\begin{tcolorbox}[colback=black!3!white, colframe=black!60,
  fonttitle=\bfseries, title={Migrating an endpoint to a new SDK}]
\textbf{User:} The product review endpoint in
\texttt{[...]/review-service/} needs to be updated to follow the
current Vercel AI SDK v6 implementation patterns. Preserve the
existing validation behavior and keep the caller-facing JSON contract
unchanged.

\textbf{Assistant:} I need to examine the current implementation of
the product review endpoint to understand what changes are required to
align it with the {\color{gray}$\triangleleft$\,\emph{gate fires}}
\tcblower
\textbf{Loaded} \texttt{vercel-ai-sdk} (gold) --- ``Guide for Vercel
AI SDK v6 implementation patterns including generateText, streamText,
[...] structured output with Output helpers [...]'' The next turn
finishes the sentence, ``\ldots to align it with the Vercel AI SDK v6
patterns,'' and the eventual edit is the one the skill prescribes,
replacing the v5 \texttt{generateObject} with \texttt{generateText}
plus an \texttt{Output.object} schema.
\end{tcolorbox}

The deepest fire of the run comes in the third turn, after the agent has read an instruction file and dumped the form's text. The gate fires a hundred-plus tokens into the message that turns that evidence into edits, midway through an email address:

\begin{tcolorbox}[colback=black!3!white, colframe=black!60,
  fonttitle=\bfseries, title={Editing a PDF form after the evidence is
  in}]
\textbf{User:} You are given a PDF file at [...] and a text file at
[...] containing natural language instructions and updated
information. Your task is to update this PDF using the latest
information provided in the text file and follow all the instructions
in it. [...]

{\color{gray}\emph{[The agent reads the instruction file, which holds a student's corrected personal information and an appeal reason, then dumps the form, a university insurance-waiver appeal filled with a nickname, a wrong email, and a wrong ID.]}}

\textbf{Assistant:} I now see the structure of the PDF form. The task
requires me to update the following fields with the information from
the text file: 1. Student Name: Replace ``Yaya'' with ``Jinya
Jiang''. 2. Student PID\#: Redact ``A88888888'' to show only the last
4 digits [...] 3. UCSD E-mail: Update ``jiang@gmail.com'' to
``jiang@ {\color{gray}$\triangleleft$\,\emph{gate fires}}
\tcblower
\textbf{Loaded} \texttt{pdf-editing-14} (gold) --- ``Complete guide
for reading and editing PDF documents with PyMuPDF.'' The restarted
turn folds the checklist into a PyMuPDF script that follows the skill's rules, replacing text at its original position and redacting with a white fill, and submits the edited form.
\end{tcolorbox}

The last transcript has two fires on one task. The gold loads on the first decoded word. Two commands later, the moment the connector's OAuth configuration scrolls into the context, the gate fires again and loads a second skill the task also needs, though it is not the marked gold:

\begin{tcolorbox}[colback=black!3!white, colframe=black!60,
  fonttitle=\bfseries, title={Hardening a connector, two loads}]
\textbf{User:} Our partner team wants to submit the Salesforce
connector package in [...] for a managed pilot. Sign-in works during
maker testing, but the security reviewer says the current
authentication flow and submission package are not sufficient for
agent-mediated use. Please harden the package [...] including the
reviewer handoff artifacts and a concise security note.

\textbf{Assistant:} I {\color{gray}$\triangleleft$\,\emph{gate
fires}}

\textbf{Loaded} \texttt{power-platform-mcp-connector-suite} (gold)
--- ``Generate complete Power Platform custom connector with MCP
integration for Copilot Studio --- includes schema generation,
troubleshooting, and validation.''

{\color{gray}\emph{[An \texttt{ls} surveys the package; a dump of
\texttt{apiDefinition.swagger.json} ends on the connector's OAuth
security block.]}}

\textbf{Assistant:} The {\color{gray}$\triangleleft$\,\emph{gate
fires again}}

\textbf{Loaded} \texttt{api-security-best-practices} (not the gold)
--- ``Implement secure API design patterns including authentication,
authorization, input validation, rate limiting, and protection
against common API vulnerabilities.'' This is what hardening the authentication flow calls for. The rollout goes on to work through
the package and ships the hardened connector with the security note
the task asks for.
\end{tcolorbox}

The progressive-disclosure row of Table~\ref{tab:skilluse} runs
Qwen3-32B in the same harness with no router. The skills enter the
way the convention prescribes, through the system prompt below, and
each skill's full SKILL.md sits on disk at the path the prompt names,
one \texttt{cat} away.

\begin{tcolorbox}[colback=blue!5!white, colframe=blue!75!black,
  fonttitle=\bfseries, title={Progressive disclosure in the harness:
  the baseline system prompt}]
\small
You are a helpful assistant that can interact with a computer.

Your response must contain exactly ONE bash code block with ONE
command (or commands connected with \&\& or \textbar\textbar).\\
Explain your reasoning before your command.

{\color{gray}\emph{[the harness's standard formatting example, rules,
and command examples, unchanged from mini-swe-agent]}}

\#\# Available skills

The following agent skills are installed. Each one's full
instructions live at \texttt{skills/<name>/SKILL.md}, which you can
read with \texttt{cat}. If a skill applies to the task, read its
SKILL.md first and follow it.

- \{name\}: \{description\}\\
$\cdots$
\end{tcolorbox}

Under this prompt the model reads a skill on 2 of the 175 tasks that
call for one. The listing itself serves the model well enough, since
forcing its first action to read the one skill it picks from that
listing raises the score to .834. The failure sits in the unprompted decision to consult a skill at all, a habit that models acquire in training for a particular harness. Even the frontier models of Table~\ref{tab:skilluse},
post-trained amid the skills ecosystem, swing by up to 38 points in
trigger rate between the two harnesses the benchmark measures \citep{han2026skilluse}. Qwen3-32B predates that ecosystem
altogether \citep{yang2025qwen3,anthropic2025skills}, so nothing in
its training rewards pausing mid-task to read installed
documentation. \textsc{Gavel} supplies the trigger from outside the
model's habits, and the same backbone reaches .909.

%% file: appendix/10_cost.tex
\section{The cost of routing}
\label{app:cost}

We price what one session spends on skill routing. Let the session
make $T$ model calls, let the library hold $n$ skills, and let the
gate fire $m$ times, where each firing triggers one \textsc{Gavel} attempt to find a
suitable skill for the current state. A skill carries roughly
100 tokens of metadata and a 1{,}895-token body, the mean over
40{,}285 publicly listed skills \citep{ling2026agentskills}; a query
carries 239 tokens, the mean over the SkillRet test set. Prices
are GPT-5.6 Sol's official rates\footnote{%
\url{https://platform.openai.com/docs/pricing}, August 2026: \$4.00
per million input tokens, \$0.40 per million cached reads, \$5.00 per
million cache writes. We ignore the long-context surcharge on
requests beyond 272K input tokens.}. Neither router decodes extra tokens, and both
inject the chosen body after routing, so those costs cancel. Budgeting
the injected bodies themselves is a separate problem \citep{chen2026bps}.

The gate and the glance read hidden states the rollout computes
anyway, and the key bank is built offline once per library, so \textsc{Gavel} pays only for verdicts. Under the pruning margin of Section~\ref{sec:exp-setup} a firing runs nine verdict forwards, each prefilling
one candidate's metadata and body followed by the query:
\begin{equation*}
C_{\textsc{Gavel}} \;=\; m \times 9 \times (100+1{,}895+239)
\times \$4/10^{6} \;\approx\; \$0.080\,m .
\end{equation*}
The bill is independent of $n$ and $T$, and it is paid off-path, since the forwards are batched prefills that never enter the rollout's context.
Progressive disclosure keeps its $100n$-token metadata menu in the prompt prefix. The serving API is stateless, so every one of the $T$ calls re-sends the context, and ``all previous input tokens \ldots
are billed as input tokens'' each time \citep{openai2026convstate}.
The menu is therefore written to the prompt cache once and read back
at the cached rate on the other $T-1$ calls:
\begin{equation*}
C_{\text{PD}} \;=\; 100n \times \bigl(\$5 + \$0.40\,(T-1)\bigr)/10^{6}
\;\approx\; \$4\times10^{-5}\; nT .
\end{equation*}
At realistic scales both bills are small and close. Three firings cost \$0.24, and a hundred-skill menu over fifty calls costs \$0.25. In general, \textsc{Gavel} is the cheaper router once $n$ exceeds roughly $2{,}000\,m/T$.

Latency follows the same shape. The gate reads a state the rollout
has already computed, and the glance is one matrix multiply of the
projected query tokens against the bank, milliseconds on a GPU. The wait comes from the verdict, whose nine prefills of about 2.2K tokens each run as one batch and take on the order of half a second per firing on a modern serving stack, less once the per-skill prefixes are KV-cached.
That pause is paid once per firing. Progressive disclosure adds no
separate stage, but every decoded token of the session attends over
the $100n$ menu positions. The prompt cache spares their recompute but not the attention over them at every step.

The menu's real cost is the window it occupies. It holds $100n$
tokens of the model's finite context for the whole session, and
metadata held in front of every call disperses the model's attention
and degrades its performance on the task itself
\citep{liu2024lost,modarressi2025nolima}. A verdict forward never enters that window, since the nine prefills run off-path. Their skill prefix is also query-independent, so a serving stack can cache it per
skill and pay fresh input only for the live query.

%% file: appendix/11_discussion.tex
\section{Discussion}
\label{app:discussion}

\paragraph{The gate.} The gate of Section~\ref{sec:exp-harness} is a
linear classifier over two read-outs the rollout already produces, namely the glance scores that the context has accumulated over the library and the final-layer state at the current position. It fires once it stays on for two consecutive tokens. We trained it once on
SkillTraj and did not engineer it further, since the question
Section~\ref{sec:exp-harness} asks is whether the routing signal
survives a live rollout, not how precisely a call can be timed. Two
designs would time it better. A sequence model over the history of
glance scores would watch the need build up across positions instead
of judging each one alone, and a backbone fine-tuned to emit a
dedicated skill-call token would fold the decision into decoding
itself. Neither changes the glance or the verdict, since the gate only decides when they run, and we leave both to future work.

\paragraph{Routing behind a cloud API.} Our experiments run the
backbone locally, while many agents call it through a hosted API.
\textsc{Gavel} moves there by letting the provider host the router.
The two projections are trained once per model on a corpus of
task--skill pairs with a contrastive loss, and
shipped with the model the way a tokenizer is. A client uploads its
skills, and the provider builds the compressed bank from them in one
forward pass each, as it would write a prompt cache. On each call the
glance runs inside the prefill the provider already performs, reading
the hidden states beside the KV cache it keeps for those tokens. The verdict then prefills the shortlisted skills off-path, with each skill prefix served from the prompt cache. The response carries the chosen
skill, or nothing when the verdict abstains, and the gate runs in the
same place from the same states. Nothing crosses the API beyond the
skills at installation and the routing decision on each call.